\newtheorem{theorem}{Theorem}
\newtheorem{lemma}{Lemma}
\newtheorem{assumption}{Assumption}
\title{Learning Explicit Credit Assignment for Cooperative Multi-Agent Reinforcement Learning via Polarization Policy Gradient}
\author{
    Wubing Chen\textsuperscript{\rm 1},
    Wenbin Li\textsuperscript{\rm 1} \thanks{Corresponding authors.},
    Xiao Liu\textsuperscript{\rm 1},
    Shangdong Yang\textsuperscript{\rm 2, \rm 1},
    Yang Gao\textsuperscript{\rm 1 $*$}
}
\begin{document}
\maketitle

\begin{abstract}
Cooperative multi-agent policy gradient (MAPG) algorithms have recently attracted wide attention and are regarded as a general scheme for the multi-agent system. Credit assignment plays an important role in MAPG and can induce cooperation among multiple agents. However, most MAPG algorithms cannot achieve good credit assignment because of the game-theoretic pathology known as \textit{centralized-decentralized mismatch}. To address this issue, this paper presents a novel method,  \textit{\underline{M}ulti-\underline{A}gent \underline{P}olarization \underline{P}olicy \underline{G}radient} (MAPPG). MAPPG  takes a simple but efficient polarization function to transform the optimal consistency of joint and individual actions into easily realized constraints, thus enabling efficient credit assignment in MAPG. Theoretically, we prove that  individual policies of MAPPG can converge to the global optimum. Empirically,  we evaluate MAPPG on the  well-known matrix game and differential game, and verify that MAPPG can converge to the global optimum for both discrete and continuous action spaces.  We also evaluate MAPPG on a set of StarCraft II micromanagement tasks and demonstrate that MAPPG outperforms  the state-of-the-art MAPG algorithms.
\end{abstract}

\section{Introduction}
\label{sec:introdiction}
Multi-agent reinforcement learning (MARL) is a critical learning technology to solve sequential decision problems with multiple agents.  
Recent developments in MARL have heightened the need for fully cooperative MARL that maximizes a reward shared by all agents. Cooperative MARL has made remarkable advances in many domains, including autonomous driving \cite{DBLP:conf/icra/CaoWDY21} and  cooperative transport \cite{DBLP:conf/icra/ShibataJM21}. To mitigate the combinatorial nature \cite{DBLP:journals/aamas/Hernandez-LealK19} and partial observability \cite{DBLP:conf/icml/OmidshafieiPAHV17} in MARL, \textit{centralized training with decentralized execution} (CTDE) \cite{DBLP:journals/jair/OliehoekSV08,DBLP:journals/ijon/KraemerB16} has become one of the mainstream settings for MARL, where  global information is provided to promote collaboration in the training phase and learned policies are executed only based on local observations. 

Multi-agent credit assignment is a crucial challenge in the MARL under the CTDE setting, which refers to attributing a global environmental reward to the individual agents' actions \cite{lica}. Multiple independent agents can learn effective collaboration policies to accomplish challenging tasks with the proper credit assignment.  
MARL algorithms can be divided into value-based and policy-based. 
Cooperative multi-agent policy gradient (MAPG) algorithms can handle both discrete  and continuous action spaces, which  is the focus of our study. Different MAPG algorithms adopt different credit assignment paradigms, which can be divided into implicit and explicit credit assignment \cite{lica}. 
Solving the  credit assignment problem implicitly needs to represent the joint action value as a function of  individual policies \cite{maddpg,lica,dop,fop,facmac}. Current state-of-the-art MAPG algorithms \cite{dop,fop,facmac} impose a monotonic constraint between the joint action value and individual policies.  While some algorithms allow more expressive value function classes, the capacity of the value mixing network is still limited by the monotonic constraints \cite{qtran,qplex}. 
The other algorithms that achieve explicit credit assignment mainly provide a shaped reward for each individual agent's action  \cite{DBLP:conf/aamas/ProperT12,coma,vdac}. However,  there is a large discrepancy between the performance of  algorithms with explicit credit assignment  and   algorithms with implicit credit assignment. 

In this paper, we analyze this discrepancy and pinpoint that the \textit{centralized-decentralized mismatch} hinders the performance of MAPG algorithms with explicit credit assignment. The \textit{centralized-decentralized mismatch} can arise when the sub-optimal  policies of agents could negatively affect the assessment of other agents' actions, which leads to catastrophic miscoordination. Note that the issue of \textit{centralized-decentralized mismatch} was raised by DOP \cite{dop}. However, the linearly decomposed critic adopted by DOP \cite{dop}  limits their representation expressiveness for  the value function.   

Inspired by Polarized-VAE \cite{polar-vae} and Weighted QMIX \cite{weighted-qmix}, we propose a policy-based algorithm called \textit{\underline{M}ulti-\underline{A}gent \underline{P}olarization \underline{P}olicy \underline{G}radient} (MAPPG) for learning explicit credit assignment to address the \textit{centralized-decentralized mismatch}.  MAPPG  encourages increasing the distance between the global optimal joint action value and the non-optimal joint action values while shortening the distance between multiple non-optimal joint action values via polarization policy gradient. 
MAPPG facilitates large-scale multi-agent cooperations and presents a new multi-agent credit assignment paradigm, enabling multi-agent policy learning like single-agent policy learning \cite{DBLP:journals/jmlr/WeiL16}. 
Theoretically, we prove that  individual policies of MAPPG can converge to the global optimum. 
Empirically,  we verify that MAPPG can converge to the global optimum compared to existing MAPG algorithms in the well-known matrix \cite{qtran} and differential games \cite{DBLP:conf/aaaiss/WeiWFL18}. We also show that MAPPG outperforms  the state-of-the-art MAPG algorithms on StarCraft II unit micromanagement tasks \cite{smac}, demonstrating its scalability in complex scenarios. Finally, the results of ablation experiments match our theoretical predictions.
\section{Related Work}
\label{sec:related_work}
\subsection{Implicit Credit Assignment}
In general, implicit MAPG algorithms utilize the learned function between the individual policies and the joint action values for credit assignment. MADDPG \cite{maddpg} and LICA \cite{lica}  learn the individual policies by directly ascending the approximate joint action value gradients. The state-of-the-art MAPG algorithms \cite{dop,fop,facmac,vdac}  introduce the idea of value function decomposition \cite{vdn,qmix,qtran,qplex,weighted-qmix} into the multi-agent actor-critic framework. DOP \cite{dop} decomposes the centralized critic as a weighted linear summation of individual critics that condition local actions. FOP \cite{fop} imposes a multiplicative form between the optimal joint policy and the individual optimal policy, and optimizes both  policies based  on maximum entropy reinforcement learning objectives. FACMAC \cite{facmac} proposes a new credit-assignment actor-critic framework that  factors the joint  action value  into individual action values and uses the  centralized gradient estimator for credit assignment. VDAC \cite{vdac} achieves the credit assignment by  enforcing the monotonic relationship between the joint action values and the shaped individual action values. Although these algorithms  allow more expressive value function classes, the capacity of the value mixing network is still limited by the monotonic constraints, and this claim  will be verified in our experiments.
\subsection{Explicit Credit Assignment}
In contrast to implicit  algorithms, explicit MAPG algorithms provide the contribution of each individual agent's action, and the individual actor is updated by following policy gradients tailored by the contribution. COMA \cite{coma} evaluates the contribution of individual agents' actions by using the centralized critic to compute an agent-specific advantage function. SQDDPG \cite{sqddpg} proposes a local reward algorithm, Shapley Q-value, which takes the expectation of marginal contributions of all possible coalitions.  Although explicit algorithms provide valuable insights into the assessment of the contribution of individual agents' actions to the global reward and thus can significantly facilitate policy optimization,  the issue of
\textit{centralized-decentralized mismatch}  hinders their performance in complex scenarios.  Compared to explicit algorithms, the proposed MAPPG can theoretically tackle the challenge of \textit{centralized-decentralized mismatch}  and experimentally outperforms existing MAPG algorithms in both convergence speed and final performance  in challenging environments.
\section{Background}
\label{sec:background}
\subsection{Dec-POMDP}
A decentralized  partially observable Markov decision process (Dec-POMDP) is a tuple $\left< S,{U},r,P,Z,O,n,\gamma \right> $, where  $n$ agents  identified by $a\in A\equiv \{1,...,n\}$ choose sequential actions, $s\in S$ is the state. At each time step, each agent chooses an action $u_a \in U$, forming a joint action $\boldsymbol{u}\in \boldsymbol{U}\equiv U^n$ which induces a transition in the environment according to the state transition function 
$P\left(s^{\prime} \mid s, \boldsymbol{u}\right): S \times \boldsymbol{U} \times S \rightarrow[0,1]$. Agents  receive the same  reward  according to the reward function $r(s, \boldsymbol{u}): S \times \boldsymbol{U}\rightarrow \mathbb{R}$. Each agent has an observation function $O(s, a): S \times A \rightarrow Z$, where a partial observation $z_a\in Z$   is drawn. $\gamma \in [0,1) $ is the discount factor.  Throughout this paper, we denote joint quantities over agents in bold, quantities with the subscript $a$ denote  quantities over agent $a$, and joint quantities over agents other than a given agent $a$ with the subscript $-a$.
Each agent tries to learn a stochastic policy for action selection: ${\pi}_a: T \times U \rightarrow [0,1]$, where $\tau_{a} \in T \equiv(Z \times U)^{*}$ is an action-observation history for agent $a$.
MARL agents try to maximize   the cumulative return, ${R}^{t}=\sum_{t=1}^{\infty} \gamma^{t-1} r^{t}$, where $r^t$ is the reward obtained from the environment by all agents at step $t$.  
\subsection{Multi-Agent Policy Gradient}
We first provide the background on single-agent policy gradient algorithms, and then introduce multi-agent policy gradient algorithms. In single-agent continuous control tasks, policy gradient algorithms \cite{pg} optimise a single agent's policy, parameterised by $\theta$, by performing gradient ascent on an estimator of the expected discounted total reward $\nabla _{\theta}J\left( \pi \right) =\mathbb{E}_{\pi}\left[ \nabla _{\theta}\log \pi \left( u\mid s \right) R^0 \right] $, where the gradient is estimated from trajectories sampled from the environment. Actor-critic \cite{pg,ac,DBLP:journals/corr/SchulmanMLJA15} algorithms use an estimated action value instead of the discounted return to solve the high variance caused by the likelihood-ratio trick in the above formula. The gradient of the policy  for  a single-agent setting  can be defined as: 
\begin{align}
	\nabla _{\theta}J\left( \pi \right) =\mathbb{E}_{\boldsymbol{\pi }}\left[ \nabla _{\theta}\log \pi \left( u\mid s \right) Q\left( s,u \right) \right].
\end{align}
A natural extension to multi-agent settings leads to the multi-agent stochastic policy gradient theorem  with agent $ a$'s policy parameterized by $\theta_a$  \cite{coma,DBLP:conf/aaaiss/WeiWFL18}, shown below:
\begin{align}
	\nabla _{\theta}J\left( \boldsymbol{\pi } \right)  =&
	\mathbb{E}_{\boldsymbol{\pi }}\left[ \sum_a{\nabla _{\theta_a}\log \pi _a\left( u_a\mid \tau_a \right) Q\left( s,\boldsymbol{u} \right)} \right] \nonumber
	\\
	=& \sum_s{d^{\boldsymbol{\pi }}\left( s \right)}\sum_a{\sum_{{u}_a}{\pi _a\left( u_a|\tau_a \right) \nabla _{\theta_a}\log \pi _a\left( u_a|\tau_a \right)}}  \nonumber \\
	& \sum_{\boldsymbol{u}_{-a}}{\boldsymbol{\pi }_{-a}\left( \boldsymbol{u}_{-a}|\boldsymbol{\tau}_{-a} \right)}Q\left( s,\boldsymbol{u} \right).
	\label{eq:mapg}
\end{align}
where $d^{\boldsymbol{\pi}}\left( s \right)$ is a discounted weighting of states encountered starting at $s^0$ and then following $\boldsymbol{\pi}$. COMA  implements the multi-agent stochastic policy gradient theorem by replacing the action-value function with the counterfactual advantage, reducing variance, and not changing the expected gradient.
\section{Analysis}
\label{sec:analysis}
In the multi-agent stochastic policy gradient theorem,  agent $a$ learns the policy by directly ascending the approximate marginal joint action value gradient for each $u_a\in U$, which is scaled by $M_a(s,u_a,\boldsymbol{\pi}_{-a})=\sum_{\boldsymbol{u}_{-a}}{\boldsymbol{\pi }_{-a}}\left( \boldsymbol{u}_{-a}\mid \boldsymbol{\tau}_{-a} \right) Q\left( s,\boldsymbol{u} \right)  $ (see Equation (\ref{eq:mapg})). Formally, suppose that the optimal and a non-optimal joint action under $s$ are $\boldsymbol{u}^*$ and $\boldsymbol{u}^{\#}$ respectively, that is, $Q(s,\boldsymbol{u}^*)>Q(s,\boldsymbol{u}^{\#})$. If it holds that $M_a(s,{u}_a^*,\boldsymbol{\pi }_{-a})<M_a(s,{u}_a^{\#},\boldsymbol{\pi }_{-a})$ due to the exploration or suboptimality of other agents' policies, we possibly have that $\pi_a(u_a^{*} \mid \tau_a)<\pi_a(u_a^{\#} \mid \tau_a)$. The decentralized policy of agent $a$ is updated by following policy gradients tailored by the centralized critic, which are negatively affected by other agents' policies. This issue is called \textit{centralized-decentralized mismatch} \cite{dop}.  We will show that \textit{centralized-decentralized mismatch} occurs in practice for the state-of-the-art MAPG algorithms on the well-known matrix game and differential game in the experimental section.

\section{The Proposed Method}
\label{sec:mappg}
In this section, we first propose a novel multi-agent actor-critic method, MAPPG, which learns  explicit credit assignment. Then we mathematically prove that MAPPG can address the issue of \textit{centralized-decentralized mismatch} and the individual policies of MAPPG can  converge to the global optimum.
\subsection{The Polarization  Policy Gradient}
\label{sec:ppg}
In the multi-agent stochastic policy gradient theorem, the scale of the policy gradient of agent $a$ is impacted by the policies of other agents, which leads to  \textit{centralized-decentralized mismatch}.  A straightforward solution is to make the policies of other agents optimal. Learning other agents' policies depends on the convergence of agent $ a$'s policy to the optimal.  If agent $ a$'s policy converges to the optimal, there seems no need to compute the scale of the policy gradient of agent $a$. Therefore, we cannot solve the problem of  \textit{centralized-decentralized mismatch} from a policy perspective, we seek to address it from the joint action value perspective. 

We define polarization joint action values to replace  original joint action values. The polarization joint action values  resolve \textit{centralized-decentralized mismatch} by increasing the distance between the values of the global optimal joint action and the non-optimal joint actions while shortening the distance between the values of multiple non-optimal joint actions.  By polarization, the influence of other agents' non-optimal policies can be largely eliminated. For convenience, the following discussion in the section will assume that  the action values are fixed in a given state $s$. In later sections, we will see how the joint action values are updated. If the optimal joint action $\boldsymbol{u}^{*}$ in state $s$ can be identified, then the polarization policy gradient is:
\begin{equation}
\begin{aligned}
	\nabla _{\theta}J\left( \boldsymbol{\pi } \right) = \mathbb{E}_{\boldsymbol{\pi }}\left[ \sum_a{\nabla _{\theta _a}\log \pi _a\left( u_a\mid \tau _a \right) Q^{PPG}\left( s,\boldsymbol{u} \right)} \right] ,   \nonumber
\end{aligned}            
\end{equation}
where
\begin{equation}
\begin{aligned}
	Q^{PPG}\left( s,\boldsymbol{u} \right) =\left\{ \begin{array}{l}
		1\ \text{if}\ \boldsymbol{u}=\boldsymbol{u}^{*}\\
		0\ \text{otherwise}\\
	\end{array} \right. 
\end{aligned}            
\end{equation}
is the polarization joint action value function. For each agent, only the gradient of the component of the optimal action is greater than 0, whereby the optimal policy can be learned. However, we cannot traverse all state-action pairs to find the optimal joint action $u^*$ in complex scenarios; therefore,  a soft version of the polarization joint action value function is defined as:
\begin{align}
Q^{PPG}\left( s,\boldsymbol{u} \right) =\exp \left( \alpha\ Q\left( s,\boldsymbol{u} \right)   \right), 
\label{eq:exp}
\end{align}
where $\alpha>0$ denotes the enlargement factor determining the distance between the optimal  and the non-optimal joint action values. However, Equation (\ref{eq:exp}) cannot work in practice. On the one hand, the result of the exponential function is easy to overflow. On the other hand, if
$\forall \boldsymbol{u},\ Q\left( s,\boldsymbol{u} \right) \leq 0 $, the polarization joint action values $Q^{PPG}\left( s,\boldsymbol{u} \right)$ are between 0 and 1. To address the polarization failure, a baseline is introduced as follows:
\begin{align}
Q^{PPG}\left( s,\boldsymbol{u} \right) = \frac{1}{\beta}\exp \left( \alpha \left( Q\left( s,\boldsymbol{u} \right) -Q\left( s,\boldsymbol{u}_{curr} \right) \right) \right) ,
\label{eq:exp_with_baseline}
\end{align}
where $\beta$ is a factor which can prevent exponential gradient explosion and $\boldsymbol{u}_{curr}\ =[\arg\max_{u_a}\ \pi_a({u_a|\tau_a})]_{a=1}^n $. By providing a baseline, the policy  is guided to pay more attention to the joint actions of $\{\boldsymbol{u}\ :\ Q\left( s,\boldsymbol{u} \right) >Q\left( s,\boldsymbol{u}_{curr} \right)\} $, which derives a self-improving method. Our method looks similar to COMA , but they are different  in nature.  The baseline in our MAPPG can help solve the \textit{centralized-decentralized mismatch}.  However, the baseline in COMA  is introduced to achieve difference rewards.

Adopting polarization joint action values, MAPPG  solves the credit assignment issue by applying the following polarization policy gradients:
\begin{align}
\nabla _{\theta}J\left( \boldsymbol{\pi } \right)  =&\frac{1}{\beta}
\mathbb{E}_{\boldsymbol{\pi }}\left[ \sum_a{\nabla _{\theta_a}\log \pi _a\left( u_a\mid \tau_a \right) Q^{PPG}\left( s,\boldsymbol{u} \right)} \right] \nonumber \\
=& \frac{1}{\beta} \sum_s{d^{\boldsymbol{\pi }}\left( s \right)}\sum_a{\sum_{{u}_a}{\pi _a\left( u_a|\tau_a \right) \nabla _{\theta_a}\log \pi _a\left( u_a|\tau_a \right)}}\nonumber\\
&  \sum_{\boldsymbol{u}_{-a}}{\boldsymbol{\pi }_{-a}}\left( \boldsymbol{u}_{-a}\mid s \right) Q^{PPG}\left( s,u_a,\boldsymbol{u}_{-a} \right).
\label{eq:pg_ppg}
\end{align}
From Equation (\ref{eq:pg_ppg}), we can see that the  gradient for action $u_a$ at state $s$ is scaled by $M_a^{PPG}(s,u_a,\boldsymbol{\pi}_{-a})= \sum_{\boldsymbol{u}_{-a}}{\boldsymbol{\pi }_{-a}}\left( \boldsymbol{u}_{-a}\mid s \right) Q^{PPG}\left( s,u_a,\boldsymbol{u}_{-a} \right)$, which is the polarization marginal joint action value function.

The power function is adopted because it has two  properties. (i) The second-order gradient of the power function is greater than 0, so it can increase the distance between the global optimal joint action value and the non-optimal joint action values, while shortening the distance between multiple non-optimal joint action values. (ii) For all $\boldsymbol{u}\in \{\boldsymbol{u}:Q\left( s,\boldsymbol{u} \right) <Q\left( s,\boldsymbol{u}_{curr} \right)\} $, the corresponding polarized joint action values $Q^{PPG}\left( s,\boldsymbol{u} \right)$  are between 0 and 1, which makes  the  policy learning  focus more on the domain $\{\boldsymbol{u}:Q\left( s,\boldsymbol{u} \right) > Q\left( s,\boldsymbol{u}_{curr} \right)\} $ in state $s$.
\subsection{Theoretical Proof}
\label{sec:theoretical_proof}
In this subsection, we introduce the joint policy improvement for MAPPG and mathematically prove that  the individual policies of MAPPG can  converge to the global optimum. 	For convenience, this section will be discussed in a fully observable environment, where each agent chooses actions based on the state instead of the action-observation history. To ensure the uniqueness of the optimal joint action, we make the following assumptions. 
\begin{assumption} The joint action value function $Q(s,\boldsymbol{u})$  has one  unique maximizing joint action  for all $s\in S$ and $|\boldsymbol{U}|<\infty$.
\end{assumption}
First, we mathematically prove that each maximizing individual action of the polarization marginal joint action value function is consistent with the maximizing joint action's corresponding  component of the  joint action value function in Theorem \ref{thm:optimality_consistency}. 
\begin{theorem}[Optimality Consistency]
Let $\boldsymbol{\pi}$ be a joint policy. Let $\boldsymbol{u}^{*}=\arg\max_{\boldsymbol{u} \in \boldsymbol{U} }Q(s,\boldsymbol{u})$ and $\boldsymbol{u}^{sec}=\arg\max_{\boldsymbol{u} \in (\boldsymbol{U}-\{\boldsymbol{u}^{*}\})  }Q(s,\boldsymbol{u})$. If  it holds that $\forall a\in A$, $\alpha >\frac{\log \boldsymbol{\pi }_{-a}\left( \boldsymbol{u}_{-a}^{*}|s \right)}{Q\left( s,\boldsymbol{u}^{\sec} \right) -Q\left( s,\boldsymbol{u}^* \right)}$ with $\alpha$ as defined in Equation (\ref{eq:exp_with_baseline}), then we have that for all individual actions $u_a^{\prime}$:
\begin{align}
	M_{a}^{PPG}\left( s,u_a^{\prime},\boldsymbol{\pi}_{-a} \right) < M_{a}^{PPG}\left( s,u_{a}^{*},\boldsymbol{\pi}_{-a} \right) ,\nonumber
\end{align}
\label{thm:optimality_consistency}
where $u_a^{\prime}\ne u_a^{*}$.
\end{theorem}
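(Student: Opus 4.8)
The plan is to collapse the claimed strict inequality into a head-to-head comparison of two weighted exponential sums, and then to bound each side by a single exponential so that the hypothesis on $\alpha$ exactly bridges the remaining gap.

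First I would expand $M_{a}^{PPG}(s,u_a,\boldsymbol{\pi}_{-a}) = \sum_{\boldsymbol{u}_{-a}} \boldsymbol{\pi}_{-a}(\boldsymbol{u}_{-a}\mid s)\, Q^{PPG}(s,u_a,\boldsymbol{u}_{-a})$ using $Q^{PPG}$ from Equation~(\ref{eq:exp_with_baseline}). The prefactor $\frac{1}{\beta}$ and the factor $\exp(-\alpha Q(s,\boldsymbol{u}_{curr}))$ are strictly positive and independent of $u_a$ and $\boldsymbol{u}_{-a}$, so they are common to $M_{a}^{PPG}(s,u_a^{\prime},\boldsymbol{\pi}_{-a})$ and $M_{a}^{PPG}(s,u_a^{*},\boldsymbol{\pi}_{-a})$ and can be cancelled; it therefore suffices to show $\sum_{\boldsymbol{u}_{-a}} \boldsymbol{\pi}_{-a}(\boldsymbol{u}_{-a}\mid s)\,\exp(\alpha Q(s,u_a^{\prime},\boldsymbol{u}_{-a})) < \sum_{\boldsymbol{u}_{-a}} \boldsymbol{\pi}_{-a}(\boldsymbol{u}_{-a}\mid s)\,\exp(\alpha Q(s,u_a^{*},\boldsymbol{u}_{-a}))$.

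Next I would bound the two sides. For the left side, because $u_a^{\prime} \neq u_a^{*}$ every joint action $(u_a^{\prime},\boldsymbol{u}_{-a})$ is distinct from the unique maximizer $\boldsymbol{u}^{*}$, so $Q(s,u_a^{\prime},\boldsymbol{u}_{-a}) \le Q(s,\boldsymbol{u}^{sec})$ by Assumption~1 and the definition of $\boldsymbol{u}^{sec}$; with $\alpha>0$ and $\sum_{\boldsymbol{u}_{-a}} \boldsymbol{\pi}_{-a}(\boldsymbol{u}_{-a}\mid s)=1$ this yields $\sum_{\boldsymbol{u}_{-a}} \boldsymbol{\pi}_{-a}(\boldsymbol{u}_{-a}\mid s)\,\exp(\alpha Q(s,u_a^{\prime},\boldsymbol{u}_{-a})) \le \exp(\alpha Q(s,\boldsymbol{u}^{sec}))$. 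For the right side, all summands are nonnegative, so discarding every term except the one with $\boldsymbol{u}_{-a}=\boldsymbol{u}_{-a}^{*}$ gives $\sum_{\boldsymbol{u}_{-a}} \boldsymbol{\pi}_{-a}(\boldsymbol{u}_{-a}\mid s)\,\exp(\alpha Q(s,u_a^{*},\boldsymbol{u}_{-a})) \ge \boldsymbol{\pi}_{-a}(\boldsymbol{u}_{-a}^{*}\mid s)\,\exp(\alpha Q(s,\boldsymbol{u}^{*}))$.

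It then remains to verify $\exp(\alpha Q(s,\boldsymbol{u}^{sec})) < \boldsymbol{\pi}_{-a}(\boldsymbol{u}_{-a}^{*}\mid s)\,\exp(\alpha Q(s,\boldsymbol{u}^{*}))$, which after taking logarithms reads $\alpha\,(Q(s,\boldsymbol{u}^{sec}) - Q(s,\boldsymbol{u}^{*})) < \log \boldsymbol{\pi}_{-a}(\boldsymbol{u}_{-a}^{*}\mid s)$; since $Q(s,\boldsymbol{u}^{sec}) - Q(s,\boldsymbol{u}^{*}) < 0$ by Assumption~1, dividing reverses the inequality and reproduces exactly the stated hypothesis on $\alpha$. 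Re-inserting the common positive factor then gives $M_{a}^{PPG}(s,u_a^{\prime},\boldsymbol{\pi}_{-a}) < M_{a}^{PPG}(s,u_a^{*},\boldsymbol{\pi}_{-a})$, as claimed. I do not anticipate a real obstacle: the only point requiring care is making the one-sided bounds tight enough to be bridged by the given threshold --- which is why the left side is bounded using the best competing value $Q(s,\boldsymbol{u}^{sec})$ and the right side is bounded by keeping only the single dominant summand $\boldsymbol{\pi}_{-a}(\boldsymbol{u}_{-a}^{*}\mid s)\exp(\alpha Q(s,\boldsymbol{u}^{*}))$ --- together with the implicit requirement that $\boldsymbol{\pi}_{-a}(\boldsymbol{u}_{-a}^{*}\mid s) > 0$, without which the hypothesis degenerates and a full-support condition on $\boldsymbol{\pi}$ would be needed.
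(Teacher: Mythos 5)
Your proof is correct and follows what is essentially the only natural route to the stated threshold --- cancel the common positive factor $\tfrac{1}{\beta}\exp(-\alpha Q(s,\boldsymbol{u}_{curr}))$, upper-bound the non-optimal side by $\exp(\alpha Q(s,\boldsymbol{u}^{sec}))$ using uniqueness of the maximizer, lower-bound the optimal side by the single term $\boldsymbol{\pi}_{-a}(\boldsymbol{u}_{-a}^{*}\mid s)\exp(\alpha Q(s,\boldsymbol{u}^{*}))$, and observe that the hypothesis on $\alpha$ is exactly the condition bridging the two bounds --- which is the same argument as the paper's. Your side remark that the condition is unsatisfiable (hence the theorem vacuous) when $\boldsymbol{\pi}_{-a}(\boldsymbol{u}_{-a}^{*}\mid s)=0$ is a correct and worthwhile observation.
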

\begin{proof}
See Appendix A.
\end{proof}
Theorem 1 reveals an important insight. The  enlargement  factor regulates  the distance between the optimal and non-optimal action values, and  MAPPG tackles the challenge of \textit{centralized-decentralized mismatch} with $\alpha >\frac{\log \boldsymbol{\pi }_{-a}\left( \boldsymbol{u}_{-a}^{*}|s \right)}{Q\left( s,\boldsymbol{u}^{\sec} \right) -Q\left( s,\boldsymbol{u}^* \right)}$.

Second, first-order optimization algorithms for training deep neural networks are difficult to ensure global convergence ~\cite{DBLP:books/daglib/0040158}. Hence, one assumption on policy parameterizations  is required for our analysis.
\begin{assumption} Given the function $\psi: S \times U \rightarrow \mathbb{R}$, agent $a$'s policy $\pi_a(\cdot|s)$ is the corresponding vector of action probabilities given by the softmax parameterization for all $u_a^{\prime}\in U$, i.e.,
\begin{align}
	\pi_{a}(u_a|s)=\frac{\exp \left(\psi_{s, u_a}\right)}{\sum_{u_a^{\prime} \in {U}} \exp \left(\psi_{s, u_a^{\prime}}\right)},\nonumber
\end{align}
where $\psi_{s,u_a}\equiv\psi(s,u_a)$ with $|{U}|<\infty$.
\end{assumption}
Then, following the standard optimization result of Theorem 10 \cite{DBLP:journals/jmlr/AgarwalKLM21}, we prove that the single agent policy  converges to the global optimum for the softmax parameterization in Lemma \ref{lemma:ipi}.
\begin{lemma}[Individual Policy Improvement]
Let  the joint action values remain unchanged during the policy improvement.
Let $\boldsymbol{\pi}^0=[\pi_a^0]_{a=1}^n$ be the initial joint policy and $\psi _{s,a}$ be the the corresponding vector of $\psi_{s,u_a}$ for all $u_a\in U$. 	The update for agent $a$ in state $s$ at iteration $t$ with the stepsize $\eta\leq \frac{(1-\gamma)^3}{8}$  is defined  as follows:
\begin{align}
	\psi _{s,a}^{t+1}  =& \psi _{s,a}^{t}+\eta \nabla _{\psi _{s,a}^{t}}V_{s,a}^{t}\left(  \pi _{a}^{t},\boldsymbol{\pi }_{-a}^{0}  \right), \nonumber 
\end{align}
where 
\begin{align}
	&	V_{s,a}^{t}\left(  \pi _{a}^{t},\boldsymbol{\pi }_{-a}^{0}  \right)= \sum_{{u}_a}{\pi _{a}^{t}\left( u_a|s \right)}M_a^{PPG}\left( s,u_a,\boldsymbol{\pi }_{-a}^{0} \right).\nonumber
\end{align}
Then, we have that $ \pi _{a}^{t}\rightarrow  \pi _{a}^{*}$  as $t\rightarrow \infty$, where $\arg\max_{u_a}\pi_a^{*}(u_a|s)=\arg\max_{u_a}M_a^{PPG}\left( s,u_a,\boldsymbol{\pi }_{-a}^{0} \right)$.
\label{lemma:ipi}
\end{lemma}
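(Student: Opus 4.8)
\emph{Step 1 (reduction to a single-state / bandit problem).} The plan is to observe that, with the co-players' policies $\boldsymbol{\pi}_{-a}^{0}$ and the joint action values held fixed, the per-state update is \emph{exactly} softmax policy gradient on a single-state problem, so the statement follows from the global-convergence analysis of the softmax policy gradient (Theorem~10 of \cite{DBLP:journals/jmlr/AgarwalKLM21}). Concretely, fix $s$ and set $c_{u_a}:=M_a^{PPG}\left( s,u_a,\boldsymbol{\pi }_{-a}^{0} \right)=\sum_{\boldsymbol{u}_{-a}}\boldsymbol{\pi }_{-a}^{0}\left( \boldsymbol{u}_{-a}\mid s \right) Q^{PPG}\left( s,u_a,\boldsymbol{u}_{-a} \right)$. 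Since $\boldsymbol{\pi}_{-a}^{0}$ is fixed and $Q^{PPG}>0$, the vector $c=(c_{u_a})_{u_a\in U}$ is a fixed strictly positive vector, and $V_{s,a}^{t}\left(  \pi _{a}^{t},\boldsymbol{\pi }_{-a}^{0}  \right)=\sum_{u_a}\pi_a^t(u_a\mid s)\,c_{u_a}$. This is the expected reward of a one-state MDP with action set $U$ and reward $c$; by Assumption~1 together with Theorem~\ref{thm:optimality_consistency}, its simplex maximizer is unique and is the point mass at $u_a^{\dagger}:=\arg\max_{u_a}c_{u_a}$ (which coincides with $u_a^{*}$). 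Because the tabular softmax parameterization of Assumption~2 uses a separate block $\psi_{s,\cdot}$ for each state, the updates decouple across states, so it suffices to argue convergence for this fixed $s$.

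\emph{Step 2 (smooth ascent and convergence of the value).} Next I would invoke the standard smoothness/ascent bookkeeping. The map $\psi_{s,a}\mapsto\sum_{u_a}\pi_a(u_a\mid s)c_{u_a}$ is $\beta$-smooth, and using the conservative bound $\beta\le 8/(1-\gamma)^3$ inherited from the discounted-MDP value-smoothness lemma of \cite{DBLP:journals/jmlr/AgarwalKLM21}, the stepsize assumption $\eta\le (1-\gamma)^3/8\le 1/\beta$ makes the ascent lemma applicable: $V_{s,a}^{t+1}\ge V_{s,a}^{t}+\tfrac{\eta}{2}\bigl\|\nabla_{\psi_{s,a}}V_{s,a}^{t}\bigr\|^2$. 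Hence $\{V_{s,a}^{t}\}_t$ is nondecreasing and bounded above by $c_{u_a^{\dagger}}$, so it converges and $\nabla_{\psi_{s,a}}V_{s,a}^{t}\to 0$. Writing the gradient explicitly, $\partial V_{s,a}/\partial \psi_{s,u_a}=\pi_a(u_a\mid s)\bigl(c_{u_a}-V_{s,a}\bigr)$, so any limit point $\pi_a^{\infty}$ satisfies, for every $u_a$, either $\pi_a^{\infty}(u_a\mid s)=0$ or $c_{u_a}=V_{s,a}(\pi_a^{\infty})$.

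\emph{Step 3 (ruling out suboptimal limits — the crux).} The softmax objective is non-concave in $\psi$, so a priori a limit point could be a suboptimal stationary point; the main work is to exclude this. Here I would follow the asymptotic-convergence argument behind Theorem~10 of \cite{DBLP:journals/jmlr/AgarwalKLM21}: since the softmax map keeps $\pi_a^{t}(u_a\mid s)>0$ for every finite $t$, one shows that the mass on $u_a^{\dagger}$ is eventually monotonically increasing while the mass on every suboptimal action is eventually monotonically decreasing (using the sign of $c_{u_a}-V_{s,a}^t$ in the gradient together with the monotone improvement of $V_{s,a}^t$ from Step 2), which forces $V_{s,a}^{t}\to c_{u_a^{\dagger}}$ and $\pi_a^{t}(u_a^{\dagger}\mid s)\to 1$. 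Therefore $\pi_a^{t}\to\pi_a^{*}$, the point mass at $u_a^{\dagger}$, and in particular $\arg\max_{u_a}\pi_a^{*}(u_a\mid s)=u_a^{\dagger}=\arg\max_{u_a}M_a^{PPG}\left( s,u_a,\boldsymbol{\pi }_{-a}^{0} \right)$, as claimed. I expect Step~3 to be the only genuine obstacle — Steps~1 and~2 are routine identification and smooth-ascent bookkeeping — with the minor additional care being that the stepsize $\eta\le(1-\gamma)^3/8$ is the conservative constant that keeps the ascent lemma valid without re-deriving a tighter smoothness bound for this degenerate single-state instance.
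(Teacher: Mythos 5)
Your proposal is correct and follows essentially the same route as the paper: the paper's proof also reduces the per-state, per-agent update to softmax policy gradient ascent on the fixed linear objective $\sum_{u_a}\pi_a(u_a|s)M_a^{PPG}(s,u_a,\boldsymbol{\pi}_{-a}^{0})$ and invokes the asymptotic global-convergence result of Theorem~10 of Agarwal et al.\ (2021), with the stepsize condition $\eta\leq(1-\gamma)^3/8$ inherited from their smoothness bound. The only cosmetic difference is that your appeal to Theorem~\ref{thm:optimality_consistency} to identify the maximizer with $u_a^{*}$ is not needed for the lemma itself (that identification is deferred to Theorem~\ref{thm:jpi}, where the condition on $\alpha$ is actually assumed).
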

\begin{proof}
See Appendix B.
\end{proof}
\begin{figure*}[htp]
\centering
\includegraphics[width=\textwidth]{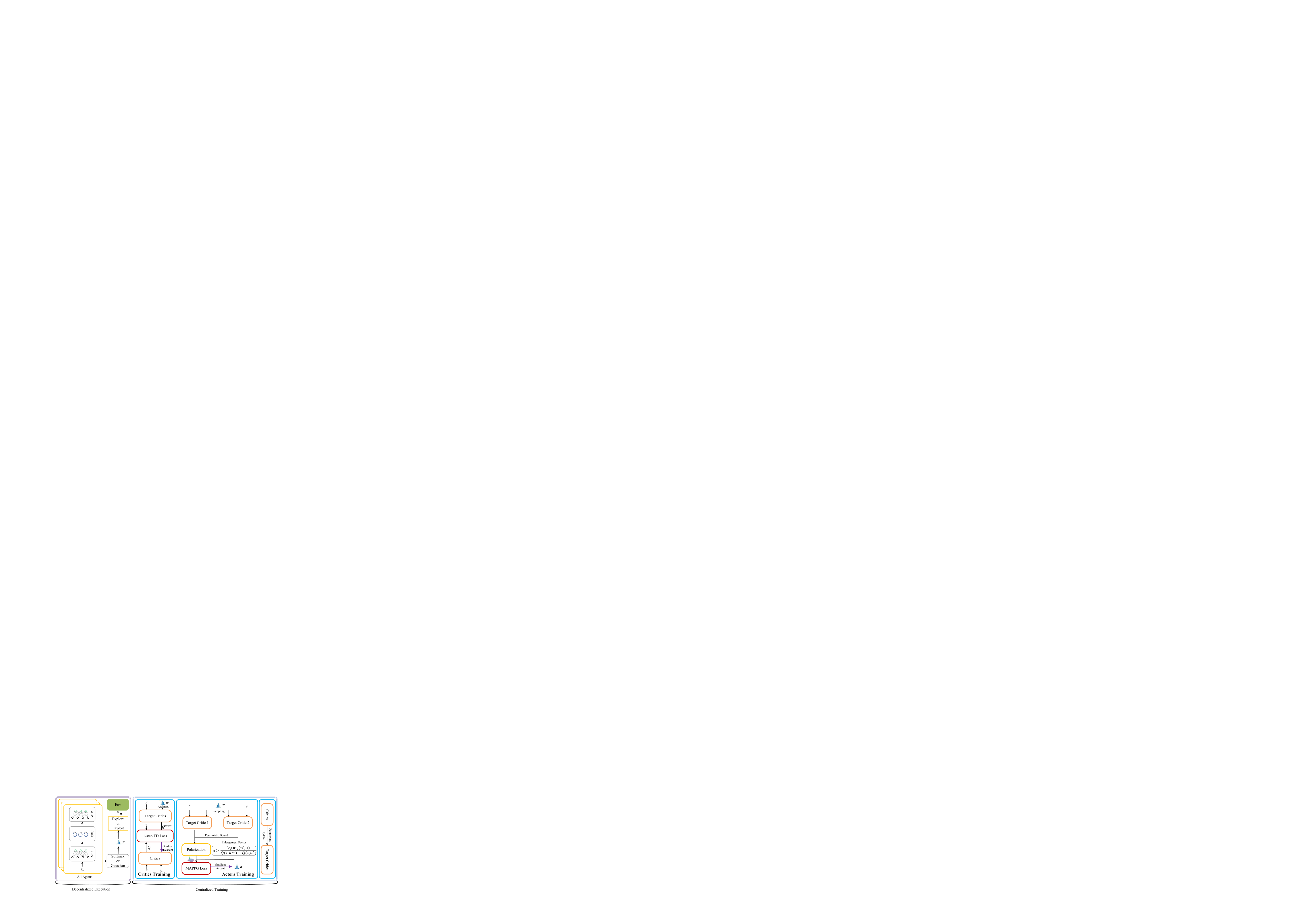}
\caption{MAPPG framework.}
\label{fig:MAPPG}
\end{figure*}
Finally, we prove that optimal individual  policies can be attained as long as MAPPG applies individual policy improvement  to all agents in Theorem \ref{thm:jpi}.
\begin{theorem}[Joint Policy Improvement]
Let  the joint action values remain unchanged during the policy improvement. Let $\boldsymbol{u}^{*}=\arg\max_{\boldsymbol{u} \in \boldsymbol{U} }Q(s,\boldsymbol{u})$ and $\boldsymbol{u}^{sec}=\arg\max_{\boldsymbol{u} \in (\boldsymbol{U}-\{\boldsymbol{u}^{*}\})  }Q(s,\boldsymbol{u})$. Let  the joint  policy at iteration t be $\boldsymbol{\pi}^t=[\pi_a^t]_{a=1}^n$. If  individual policy improvement is applied to each agent $a\in A$ and $\alpha >\max_{a\in A} \frac{\log \boldsymbol{\pi }^{0}_{-a}\left( \boldsymbol{u}_{-a}^{*}|s \right)}{Q\left( s,\boldsymbol{u}^{\sec} \right) -Q\left( s,\boldsymbol{u}^* \right)}$, then we have  that $ \boldsymbol{\pi}^{t}\rightarrow  \boldsymbol{\pi}^{*}$  as $t\rightarrow \infty$, where $\arg\max_{\boldsymbol{u}}\boldsymbol{\pi}^{*}(\boldsymbol{u}|s)=\arg\max_{\boldsymbol{u}}Q\left( s,\boldsymbol{u} \right)$ .
\label{thm:jpi}
\end{theorem}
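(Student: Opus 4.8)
The plan is to combine Theorem~\ref{thm:optimality_consistency} (Optimality Consistency) with Lemma~\ref{lemma:ipi} (Individual Policy Improvement) by a fixed-point / invariance argument. The key observation is that the hypothesis $\alpha > \max_{a\in A}\frac{\log \boldsymbol{\pi}^0_{-a}(\boldsymbol{u}^*_{-a}|s)}{Q(s,\boldsymbol{u}^{\sec})-Q(s,\boldsymbol{u}^*)}$ is stated in terms of the \emph{initial} joint policy $\boldsymbol{\pi}^0$, and in Lemma~\ref{lemma:ipi} each agent $a$ updates $\pi_a^t$ while holding the other agents frozen at $\boldsymbol{\pi}_{-a}^0$. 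So first I would make explicit that the theorem's update scheme is exactly: fix all $-a$ policies at $\boldsymbol{\pi}^0_{-a}$, run the Lemma~\ref{lemma:ipi} ascent for agent $a$ to convergence (or interleave — either reading works since the marginals $M_a^{PPG}(s,\cdot,\boldsymbol{\pi}_{-a}^0)$ are fixed throughout agent $a$'s updates, the joint action values being frozen by hypothesis).

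Second, I would apply Theorem~\ref{thm:optimality_consistency} with the joint policy taken to be $\boldsymbol{\pi}^0$: since $\alpha$ exceeds the stated bound for every $a$, we get $M_a^{PPG}(s,u_a',\boldsymbol{\pi}_{-a}^0) < M_a^{PPG}(s,u_a^*,\boldsymbol{\pi}_{-a}^0)$ for all $u_a' \ne u_a^*$ and all $a\in A$. Hence $\arg\max_{u_a} M_a^{PPG}(s,u_a,\boldsymbol{\pi}_{-a}^0) = u_a^*$, the $a$-th component of the globally optimal joint action $\boldsymbol{u}^*$. Third, I would invoke Lemma~\ref{lemma:ipi}: under the softmax parameterization (Assumption~2) with stepsize $\eta \le (1-\gamma)^3/8$, agent $a$'s policy converges, $\pi_a^t \to \pi_a^*$, with $\arg\max_{u_a}\pi_a^*(u_a|s) = \arg\max_{u_a} M_a^{PPG}(s,u_a,\boldsymbol{\pi}_{-a}^0) = u_a^*$. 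Doing this for every agent $a$ and taking the product policy, $\boldsymbol{\pi}^t \to \boldsymbol{\pi}^* = \prod_a \pi_a^*$, and $\arg\max_{\boldsymbol{u}}\boldsymbol{\pi}^*(\boldsymbol{u}|s) = (\arg\max_{u_a}\pi_a^*(u_a|s))_{a=1}^n = (u_a^*)_{a=1}^n = \boldsymbol{u}^* = \arg\max_{\boldsymbol{u}} Q(s,\boldsymbol{u})$, which is the claim. One should also note the result holds for every $s\in S$, so the statement about $\arg\max_{\boldsymbol{u}}\boldsymbol{\pi}^*(\boldsymbol{u}|s)$ is uniform over states.

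The main obstacle — and the point that needs the most care — is the interaction between the two agents' updates and the fact that Theorem~\ref{thm:optimality_consistency}'s condition references $\boldsymbol{\pi}_{-a}$. If one wants a genuinely \emph{simultaneous} update of all agents (rather than the frozen-$\boldsymbol{\pi}^0_{-a}$ scheme literally written in Lemma~\ref{lemma:ipi}), then the marginal $M_a^{PPG}(s,\cdot,\boldsymbol{\pi}_{-a}^t)$ changes as the other agents move, and one must argue that the property ``$u_a^*$ is the unique argmax of $M_a^{PPG}(s,\cdot,\boldsymbol{\pi}_{-a})$'' is \emph{preserved} along the trajectory. The clean way is to show it is an invariant: if $\arg\max_{u_a}\pi_a(u_a|s)=u_a^*$ for all $a$ (true at $t=0$ once we know it via the $\alpha$-bound applied to $\boldsymbol{\pi}^0$... actually the $\alpha$-bound is what makes $u_a^*$ the argmax of the marginal, not of $\pi_a^0$), then because $\boldsymbol{\pi}_{-a}(\boldsymbol{u}^*_{-a}|s)$ can only increase as the ascent pushes each $\pi_b$ toward mass on $u_b^*$, the lower bound in Theorem~\ref{thm:optimality_consistency} only gets easier to satisfy, so the argmax property is maintained. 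Then monotone improvement of each $V_{s,a}$ plus the softmax convergence result closes the induction. I would present the frozen-$\boldsymbol{\pi}^0_{-a}$ version as the primary argument (it follows immediately from the two cited results) and remark that the invariance argument extends it to the interleaved/simultaneous case.
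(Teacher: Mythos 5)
Your primary argument is correct and matches the paper's own route: Appendix B's proof of Theorem~\ref{thm:jpi} is exactly the combination you describe — apply Theorem~\ref{thm:optimality_consistency} at $\boldsymbol{\pi}^0$ to identify $u_a^*$ as the unique maximizer of $M_a^{PPG}(s,\cdot,\boldsymbol{\pi}_{-a}^0)$ for every $a$, then invoke Lemma~\ref{lemma:ipi} per agent and take the product, with the frozen-$\boldsymbol{\pi}_{-a}^0$ reading confirmed by the paper's remark immediately after the theorem. Your closing remark about extending to simultaneous updates via an invariance argument goes beyond what the paper establishes (it only claims this empirically), and as sketched it has a gap — gradient ascent on $V_{s,a}$ does not guarantee that $\pi_b(u_b^*|s)$ increases monotonically at every step — but since you present it only as a side remark it does not affect the validity of the main proof.
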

\begin{proof}
See Appendix B.
\end{proof}
Although Theorem \ref{thm:jpi} requires that when optimizing the policy of a single agent,  other agents' policies should be maintained as $\boldsymbol{\pi}_{-a}^0$. MAPPG replaces $\boldsymbol{\pi}_{-a}^0$ with the current policies $\boldsymbol{\pi}_{-a}^t$ of other agents  in practice, which is a more efficient sampling strategy. This change does not  compromise optimality empirically.

\subsection{MAPPG Architecture}
\label{sec:mappg_architecture}
The overall framework of MAPPG is illustrated in Figure \ref{fig:MAPPG}. For each agent $a$, there is an individual actor $\pi_a(u_a|\tau_a)$ parameterized by $\theta_a$. We denote the joint policy as $\boldsymbol{\pi}=[\pi_a]_{a=1}^n$. Two centralized components are critics $Q(s,\boldsymbol{u})$ and target critics  $Q^{target}(s,\boldsymbol{u})$, parameterized by $\phi$ and $\phi^{-}$. 

At the execution phase, each agent selects actions w.r.t. the current policy and exploration based on the local observation  in a decentralized manner. By interacting with the environment, the transition tuple  $e=\left( s,[z_a]_{a=1}^n,\boldsymbol{u},r,s^{\prime},[z_a^{\prime}]_{a=1}^n \right)$ is added to the buffer.

At the training phase, mini-batches of experiences are sampled from the buffer uniformly at random. We train the parameters of critics to minimise  the 1-step TD loss   by descending their gradients according to:
\begin{align}
\nabla_{\phi} L_{td}\left( \phi \right) =	\nabla_{\phi} \mathbb{E}_{e \sim  D }\left[ \left( y-Q\left( s,\boldsymbol{u} \right) \right) ^2 \right] ,
\label{equation:tdloss}
\end{align}
where $y=r+\gamma Q^{target}\left( s^{\prime},{\arg\max}_{\boldsymbol{u}^{\prime}}\ \boldsymbol{\pi }\left( \boldsymbol{u}^{\prime}|\tau ^{\prime} \right) \right) $.
In the proof of Theorem \ref{thm:jpi}, some strong constraints need to be satisfied. To derive a practical algorithm, we must make approximations. First,  MAPPG adopts target critics for individual policy improvement as an implementation of fixed joint action values. Second, the estimation of the Q-value  has aleatoric  uncertainty and epistemic  uncertainty. Without a constraint, maximization of $J(\boldsymbol{\pi})$ with polarization would lead to an excessively large policy update; hence, we now consider how to modify the objective. We apply the pessimistic bound of $Q^{PPG}$ with the help of two target critics $\{Q_1^{target},\ Q_2^{target}\}$ and penalize changes to the policy that make $Q^{PPG}$ larger than $L$. We train two critics, which are learned with the same training setting except for the initialization parameters. The two target critics share the same network structure as that of the two critics, and the parameters of the two target critics are periodically synchronized with those of the two critics, respectively.  We train the parameters of actors to maximize  the expected polarization Q-function which is called the MAPPG loss  by ascending their gradients according to:
\begin{align}
\nabla _{\theta}J\left( \boldsymbol{\pi } \right) =& \frac{1}{\beta}\mathbb{E}_{\boldsymbol{\pi }}\left[ \sum_a{\nabla _{\theta_a}\log \pi_{a}\left( u_a\mid \tau _a \right)}\right.\nonumber\\
& \left.\min \left( \hat{Q}^{PPG}\left( s,\boldsymbol{u} \right) ,\,\,L \right) \right] ,
\label{eq:pg_loss}
\end{align}
where 
\begin{align}
\hat{Q}^{PPG}(s,\boldsymbol{u})=& \exp\left(\alpha \left( \min_{k\in \left\{ 1,2 \right\}}Q_{k}^{target}\left( s,\boldsymbol{u} \right) \right.\right. \nonumber \\ 
&  \left.\left. -\max_{k\in \left\{ 1,2 \right\}}Q_{k}^{target}\left( s,\boldsymbol{u}_{curr} \right) \right)\right) .\nonumber
\end{align}
To prevent vanishing gradients  caused by the increasing action probability with an inappropriate learning rate in practice, the gradients for the joint action $\boldsymbol{u}$ are set to 0 if $\hat{Q}^{PPG}(s,\boldsymbol{u})<1$ or $\forall a, \pi_a(u_a|\tau_a)>P$ where $P\geq0.5$,  which is called the policy gradient clipping.
For completeness, we summarize the training of MAPPG in the Algorithm \ref{alg:MAPPG}. More details are included in Appendix C.
\begin{algorithm}[!h]
\caption{MAPPG}
\label{alg:4}
\begin{algorithmic}[1]
	\FOR{${episode} = 1$ to ${max\_training\_episode}$}
	\STATE Initialize the environment
	\FOR{$t = 1$ to $max\_episode\_length$}
	\STATE For all agents, get the current state $s$ and observations $[z_a]_{a=1}^n$, choose a joint action $\boldsymbol{u}$ w.r.t. the current policy  and exploration
	\STATE Execute the joint action $\boldsymbol{u}$, observe a reward $r$, and   get the next state $s^{\prime}$ and observations $[z_a^{\prime}]_{a=1}^n$
	\STATE Add the transition $\left( s,[z_a]_{a=1}^n,\boldsymbol{u},r,s^{\prime},[z_a^{\prime}]_{a=1}^n \right)$ to the buffer $D$
	\IF{$time\_to\_update\_actors\_and\_critics$}
	\STATE Sample a random minibatch of $K$ samples from $D$ 
	\STATE Update $\phi$ by descending their gradients according to Equation  (\ref{equation:tdloss})
	\STATE Update $\theta$ by ascending their gradients according to Equation (\ref{eq:pg_loss}) with the policy gradient clipping
	\ENDIF
	\IF {$time\_to\_update\_target\_critics$}
	\STATE Replace target parameters $\phi_i^{-} \gets \phi_i$  for $i\in \{1,2\}$
	\ENDIF
	\ENDFOR
	\ENDFOR
\end{algorithmic}
\label{alg:MAPPG}
\end{algorithm}
\section{Experiments}
\label{sec:experiments}
In this section, first, we empirically study the optimality of MAPPG for discrete and continuous action spaces.  Then, in StarCraft II,  we demonstrate that MAPPG outperforms state-of-the-art MAPG algorithms.  By ablation studies, we verify the effectiveness of our polarization joint action values and the pessimistic bound of $Q^{PPG}$. We further perform an ablation study to verify  the effect of different  enlargement factors  on convergence to the optimum. 
In  matrix and differential games, the worst result in the five training runs with different random seeds is selected to exclude the influence of random initialization parameters of the neural network. All the learning curves  are plotted based on five training runs with different random seeds using mean and standard deviation in StarCraft II and  the ablation experiment.

\subsection{Matrix Game and Differential Game}
In the discrete matrix and continuous differential games, we investigate whether MAPPG can converge to optimal compared with existing MAPG algorithms, including MADDPG \cite{maddpg}, COMA \cite{coma}, DOP \cite{dop}, FACMAC \cite{facmac}, and FOP \cite{fop}. The two games have one common characteristic: some destructive penalties are around with the optimal solution \cite{fop}, which triggers the issue of the \textit{centralized-decentralized mismatch}.
\begin{table}[!h]\scriptsize 	
\begin{subtable}{0.48\linewidth}
	\begin{tabularx}{1\linewidth}{ 
			| >{\centering\arraybackslash}X 
			|| >{\centering\arraybackslash}X 
			| >{\centering\arraybackslash}X | >{\centering\arraybackslash}X| }
		\hline
		\diagbox[width=1.8\linewidth]{$u_1$}{$u_2$}  & A & B & C \\
		\hline
		\hline
		A & \textbf{15} & -12 & -12 \\
		\hline
		B & -12 & 10 & 10 \\  
		\hline
		C &-12 & 10 & 10 \\  
		\hline
	\end{tabularx}
	\caption{Payoff of matrix game}
	\label{tb:payoff}
\end{subtable}
\hfill
\begin{subtable}{0.48\linewidth}
	\begin{tabularx}{1\linewidth}{ 
			| >{\centering\arraybackslash}X 
			|| >{\centering\arraybackslash}X 
			| >{\centering\arraybackslash}X | >{\centering\arraybackslash}X| }
		\hline 
		\diagbox[width=1.8\linewidth]{$\pi_1$}{$\pi_2$}  & 0.0(A) & \textbf{0.8}(B) & 0.2(C) \\
		\hline
		\hline
		0.0(A) & 14.9 & -11.8 & -11.8 \\
		\hline
		\textbf{0.7}(B) & -12.1 & \textbf{9.9} & 9.9 \\  
		\hline
		0.3(C) &-12.0 & 9.9 & 9.9 \\
		\hline
	\end{tabularx}
	\caption{COMA: $\pi_1,\pi_2,Q$}
	\label{tb:coma}
\end{subtable}
\hfill
\begin{subtable}{0.48\linewidth}
	\begin{tabularx}{1\linewidth}{ 
			| >{\centering\arraybackslash}X 
			|| >{\centering\arraybackslash}X 
			| >{\centering\arraybackslash}X | >{\centering\arraybackslash}X| }
		\hline
		\diagbox[width=1.8\linewidth]{ $\pi_1$ }{    $\pi_2$ }  & 0.0(A) & \textbf{0.7}(B) & 0.3(C) \\
		\hline
		\hline
		0.0(A) & -32.5 & -11.2 & -11.2 \\
		\hline
		\textbf{0.9}(B) & -11.4 & \textbf{9.9} & 9.9 \\   
		\hline
		0.1(C) &-11.5 & 9.9 & 9.9 \\   
		\hline
	\end{tabularx}
	\caption{DOP: $\pi_1,\pi_2,Q$}
	\label{tb:dop}
\end{subtable}
\quad
\begin{subtable}{0.48\linewidth}
	\begin{tabularx}{1\linewidth}{ 
			| >{\centering\arraybackslash}X 
			|| >{\centering\arraybackslash}X 
			| >{\centering\arraybackslash}X | >{\centering\arraybackslash}X| }
		\hline
		\diagbox[width=1.8\linewidth]{ $\pi_1$ }{ $\pi_2$ }  &  0.0(A) & \textbf{0.5}(B) & 0.5(C) \\
		\hline
		\hline
		0.0(A) & -11.5 & -11.5 & -11.5 \\
		\hline
		\textbf{0.5}(B) & -11.5 & \textbf{9.9} & 9.9 \\  
		\hline
		0.5(C) &-11.5 & 9.9 & 10.0 \\  
		\hline
	\end{tabularx}
	\caption{FACMAC: $\pi_1,\pi_2,Q$}
	\label{tb:facmac}
\end{subtable}
\hfill
\begin{subtable}{0.48\linewidth}
	\begin{tabularx}{1\linewidth}{ 
			| >{\centering\arraybackslash}X 
			|| >{\centering\arraybackslash}X 
			| >{\centering\arraybackslash}X | >{\centering\arraybackslash}X| }
		\hline
		\diagbox[width=1.8\linewidth]{ $\pi_1$ }{ $\pi_2$ }  & 0.0(A) & 0.0(B) &  \textbf{1.0}(C) \\
		\hline
		\hline
		0.0(A) & 9.1 & -4.4 & -7.4 \\
		\hline
		0.0(B) & -5.3 & 9.0 & 5.0 \\  
		\hline
		\textbf{1.0}(C) &-2.6 & 9.0 & \textbf{9.7} \\  
		\hline
	\end{tabularx}
	\caption{FOP: $\pi_1,\pi_2,Q$}
	\label{tb:fop}
\end{subtable}
\hfill
\begin{subtable}{0.48\linewidth}
	\begin{tabularx}{1\linewidth}{ 
			| >{\centering\arraybackslash}X 
			|| >{\centering\arraybackslash}X 
			| >{\centering\arraybackslash}X | >{\centering\arraybackslash}X| }
		\hline
		\diagbox[width=1.8\linewidth]{ $\pi_1$ }{ $\pi_2$ }  & \textbf{0.4}(A)& 0.3(B) & 0.3(C) \\
		\hline
		\hline
		\textbf{0.4}(A) & \textbf{15.2} & -12.2 & -12.2 \\
		\hline
		0.3(B) & -12.0 & 10.0 & 10.0 \\  
		\hline
		0.3(C) &-12.0 & 10.0 & 10.0 \\  
		\hline
	\end{tabularx}
	\caption{MAPPG: $\pi_1,\pi_2,Q$}
	\label{tb:mappg}
\end{subtable}
\caption{ The cooperative matrix game. Boldface means the optimal/greedy actions from individual policies.}
\label{tb:matrix}
\end{table}
\subsubsection{Matrix Game}
The matrix game is shown in Table \ref{tb:matrix} (a), which is the modified  matrix game proposed by QTRAN \cite{qtran}. This matrix game captures a  cooperative multi-agent task where we have two agents with three actions each. We show the results of COMA, DOP, FACMAC, FOP, and MAPPG  over 10$k$ steps, as in Table \ref{tb:matrix} (b) to \ref{tb:matrix} (f). 
MAPPG uses an $\epsilon$-greedy policy where $\epsilon$ is annealed from 1 to 0.05 over 10$k$ steps. MAPPG is the only algorithm that can successfully converge to the optimum. The results of these algorithms on the original matrix game proposed by QTRAN \cite{qtran}, and  more experiments and details are included in Appendix C.
\begin{figure}[!htp]
\centering
\begin{subfigure}[b]{0.495\linewidth}
	\centering
	\includegraphics[width=\linewidth]{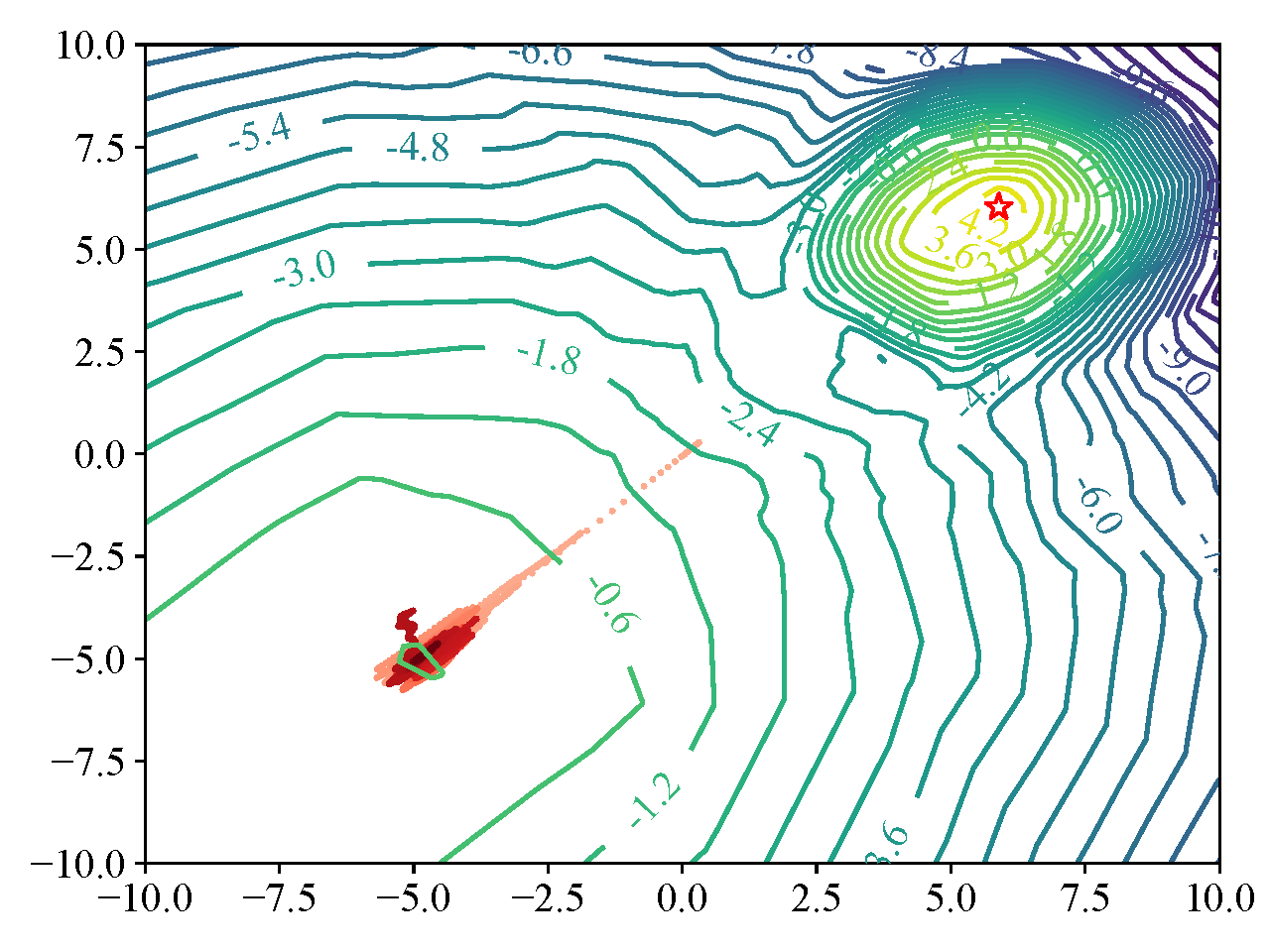}
	\caption{MADDPG}
	\label{fig:maddpg_mtq}
\end{subfigure}
\hfill
\begin{subfigure}[b]{0.495\linewidth}
	\centering
	\includegraphics[width=\linewidth]{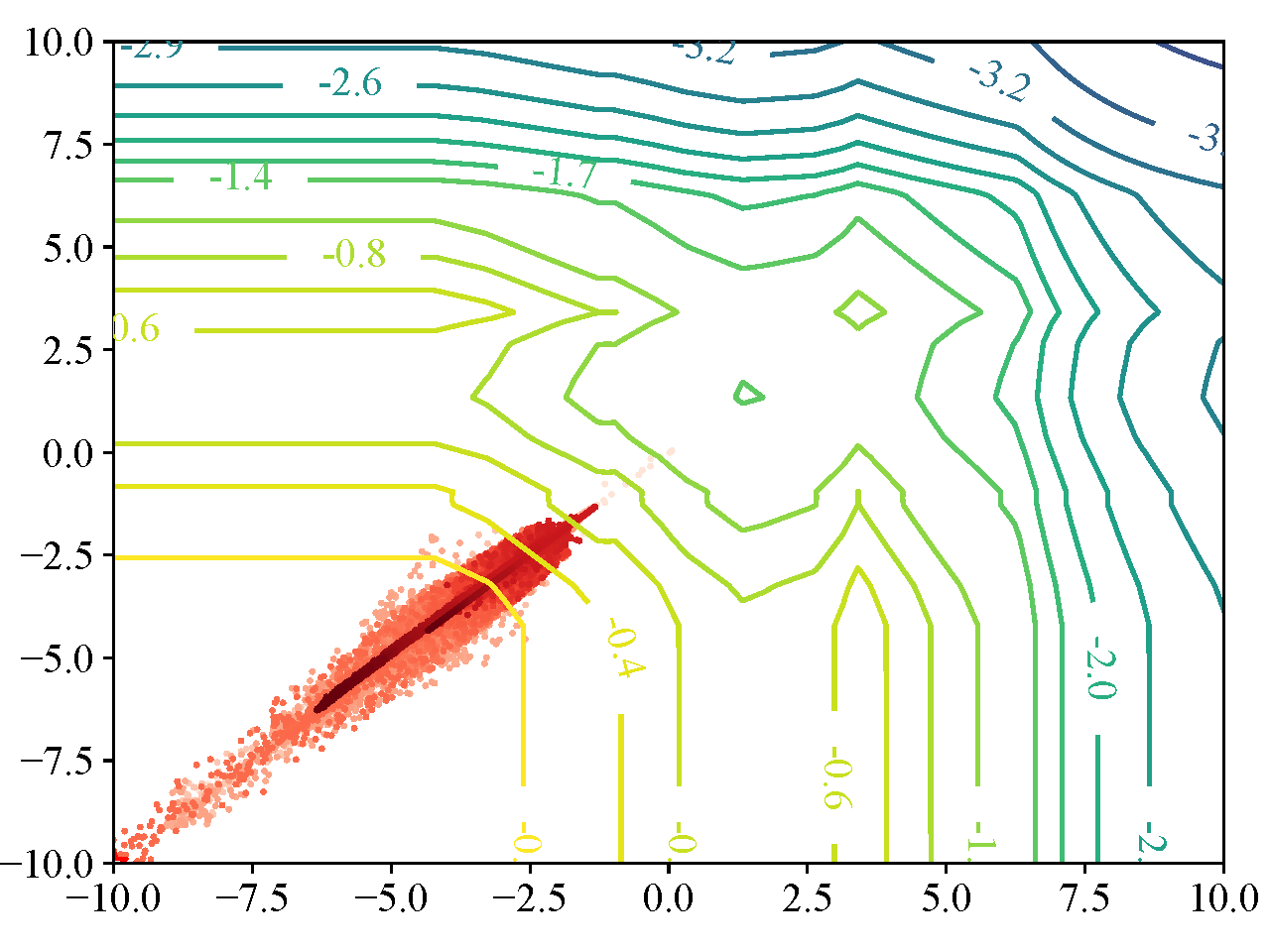}
	\caption{FACMAC}
	\label{fig:facmac_mtq}
\end{subfigure}
\hfill
\begin{subfigure}[b]{0.495\linewidth}
	\centering
	\includegraphics[width=\linewidth]{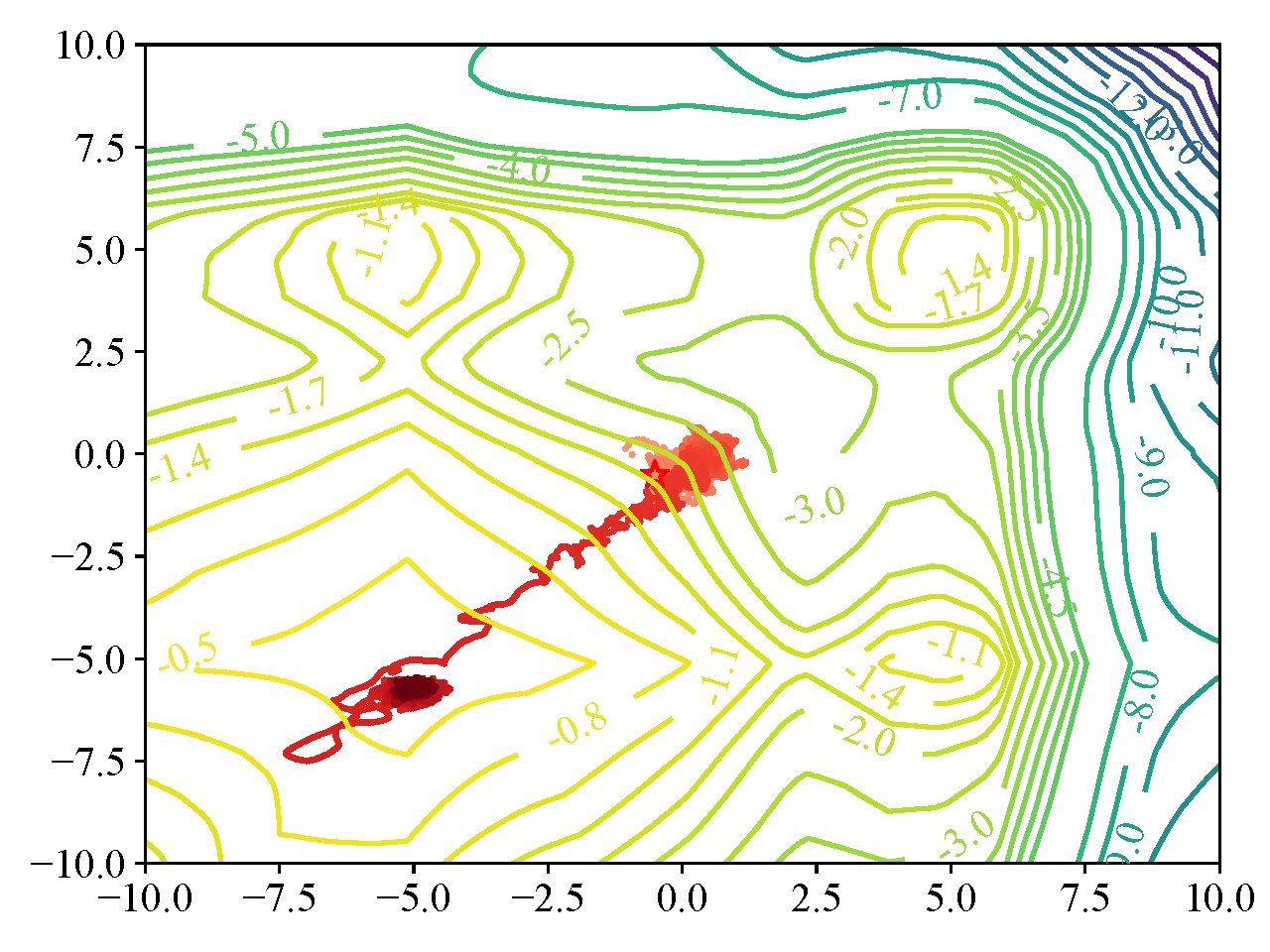}
	\caption{FOP}
	\label{fig:fop_mtq}
\end{subfigure}
\hfill
\begin{subfigure}[b]{0.495\linewidth}
	\centering
	\includegraphics[width=\linewidth]{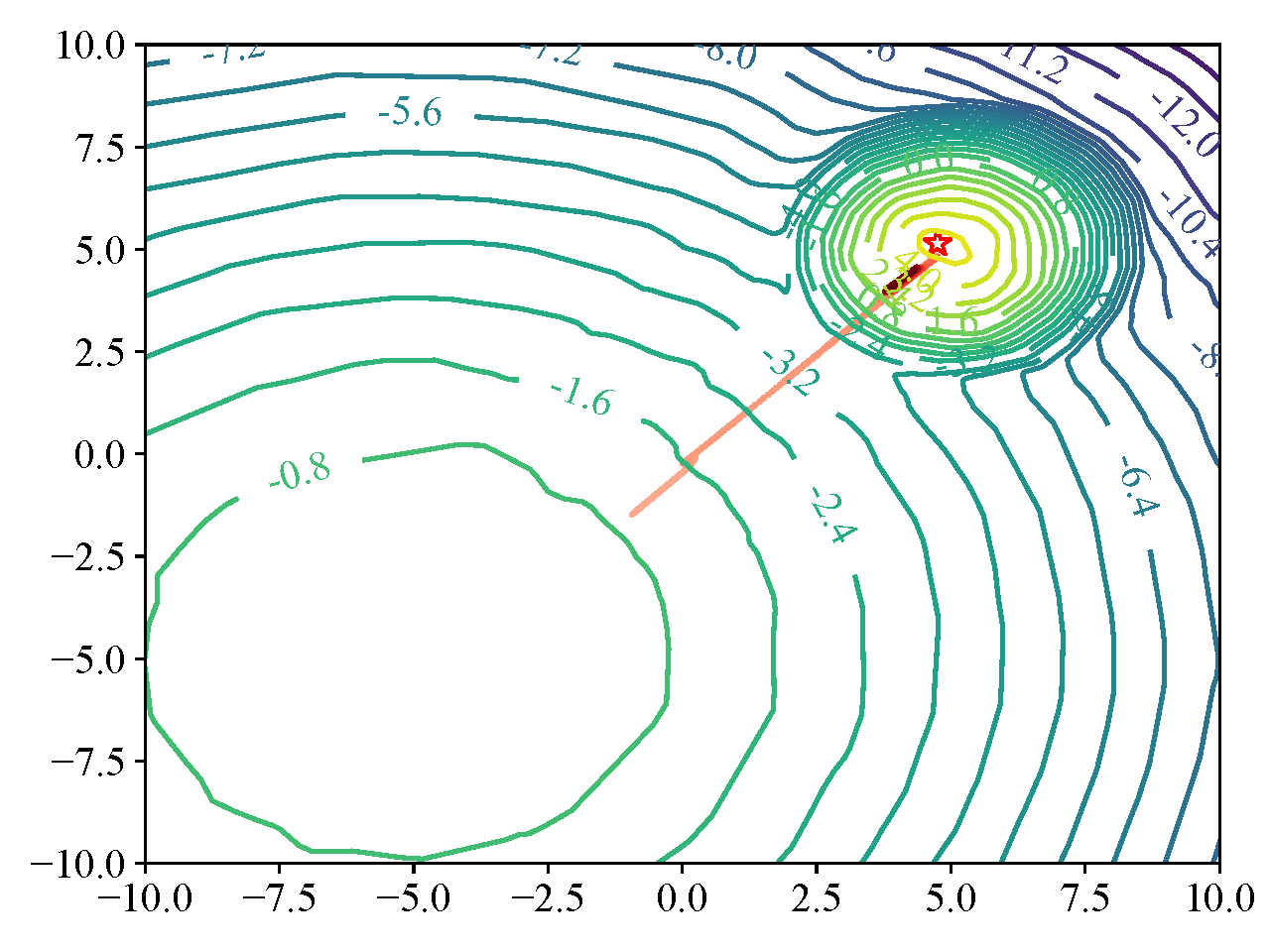}
	\caption{MAPPG}
	\label{fig:mappg_mtq}
\end{subfigure}
\caption{Learning paths  of different algorithms in the MTQ game. The horizontal and vertical axes represent $u_1$ and $u_2$, respectively.  All the points on a given contour line are all at the same joint action values. The shallow red and dark red indicate the start and end of the learning paths, respectively. }
\label{fig:mtq}
\end{figure}
\subsubsection{Differential  Game}
The differential game is the modification of  the Max of Two Quadratic (MTQ) Game from previous literature \cite{fop}. This is a  single-state continuous game for two agents, and each agent has a one-dimensional bounded continuous action space ([-10, 10]) with a shared reward function. In Equation (\ref{eq:df}), $u_1$ and $u_2$  are the actions of two agents and $r(u_1, u_2)$ is the shared reward function received by two agents. There is a sub-optimal solution 0 at (-5, -5) and a global optimal solution 5 at (5, 5).
\begin{align}\label{eq:df}
\left\{ \begin{array}{l}
	f_1=0.8\times \left[ -\left( \frac{u_1+5}{5} \right) ^2-\left( \frac{u_2+5}{5} \right) ^2 \right]\\
	f_2=1\times \left[ -\left( \frac{u_1-5}{1} \right) ^2-\left( \frac{u_2-5}{1} \right) ^2 \right] +5\\
	r\left( u_1,u_2 \right) =\max \left( f_1,f_2 \right)\\
\end{array} \right.   .
\end{align}
\begin{figure*}[htp]
\centering
\begin{subfigure}[b]{0.245\linewidth}
	\centering
	\includegraphics[width=\linewidth]{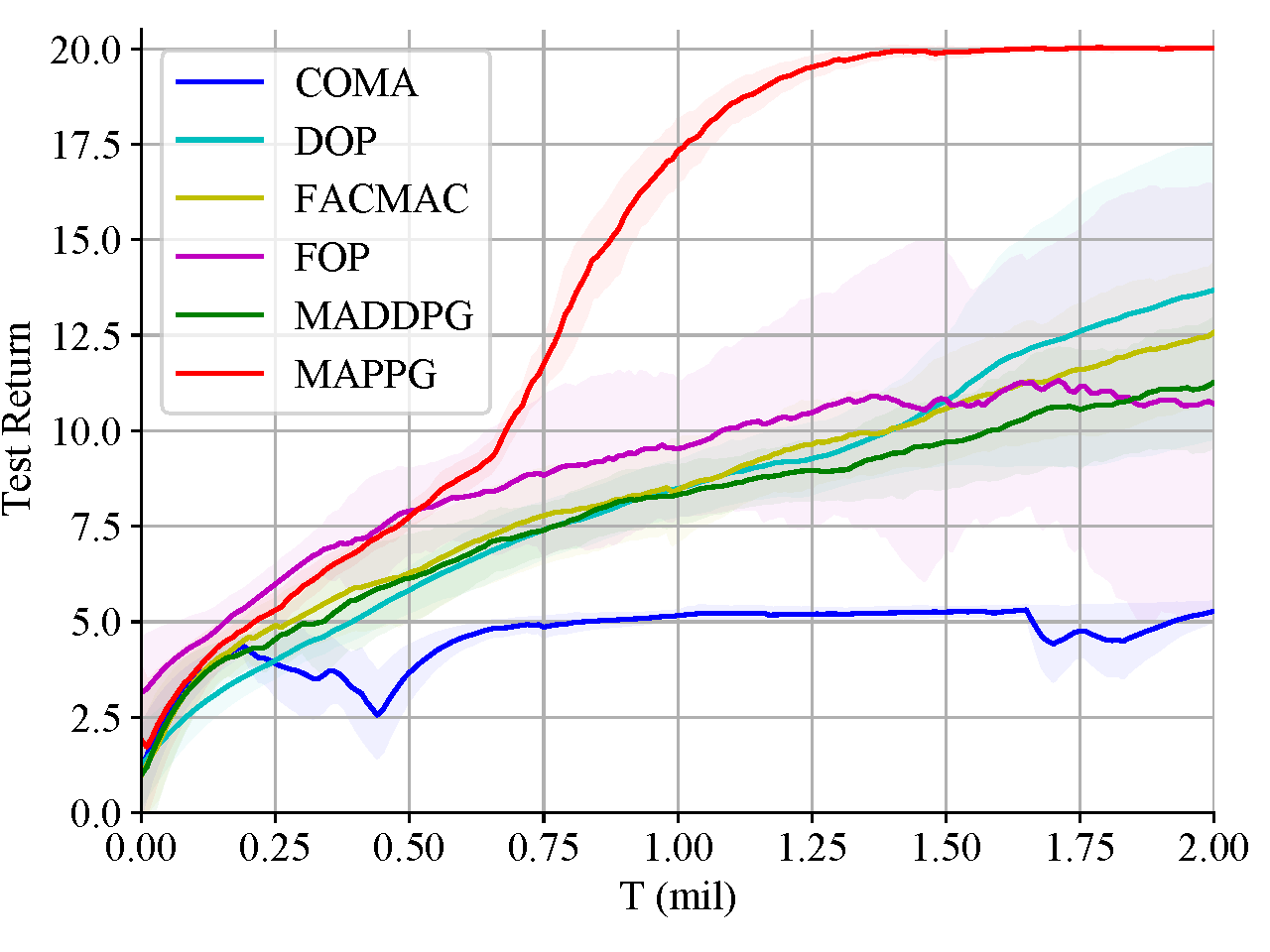}
	\caption{3s\_vs\_4z}
\end{subfigure}
\hfill
\begin{subfigure}[b]{0.245\linewidth}
	\centering
	\includegraphics[width=\linewidth]{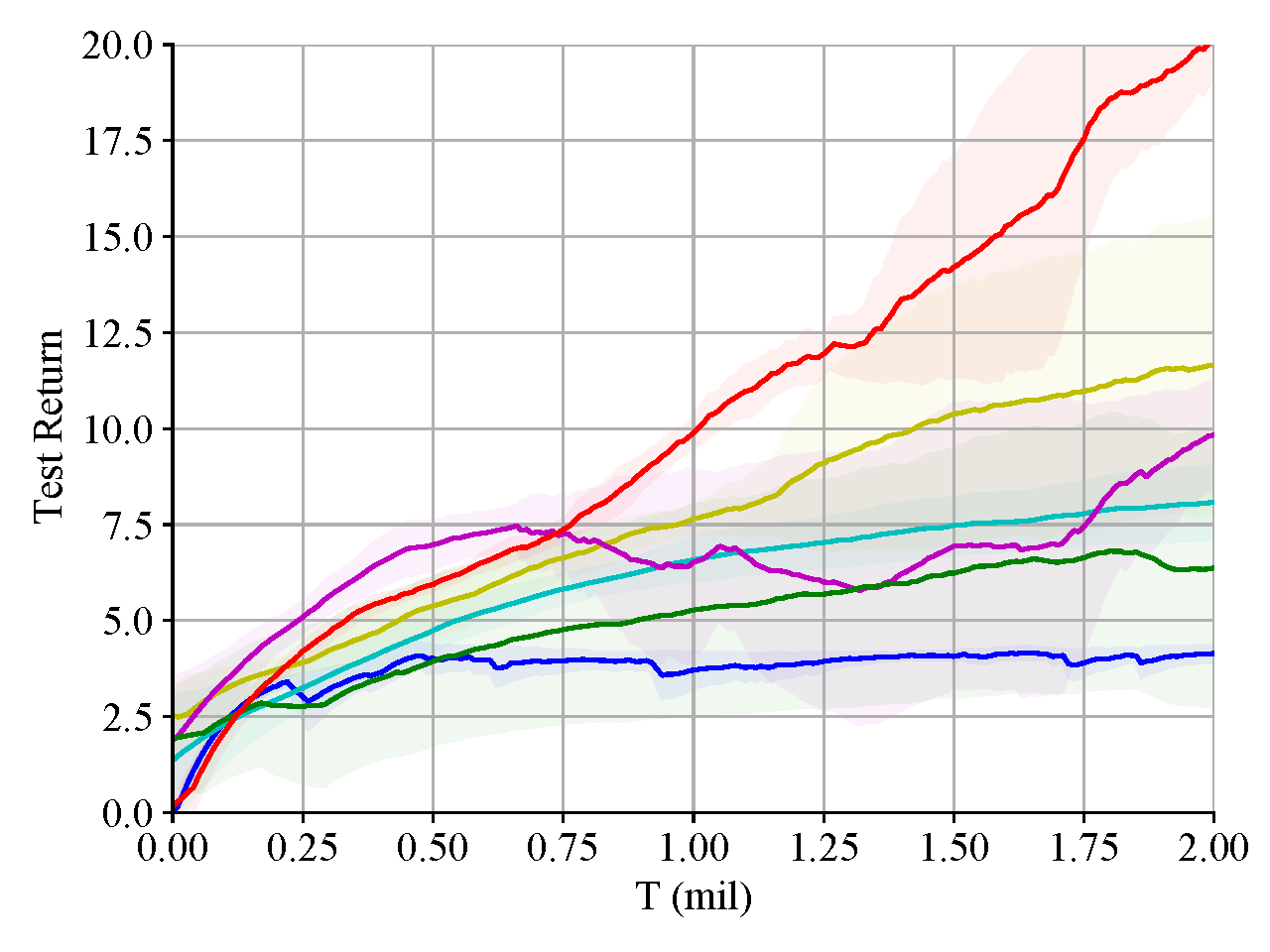}
	\caption{3s\_vs\_5z}
\end{subfigure}
\hfill
\begin{subfigure}[b]{0.245\linewidth}
	\centering
	\includegraphics[width=\linewidth]{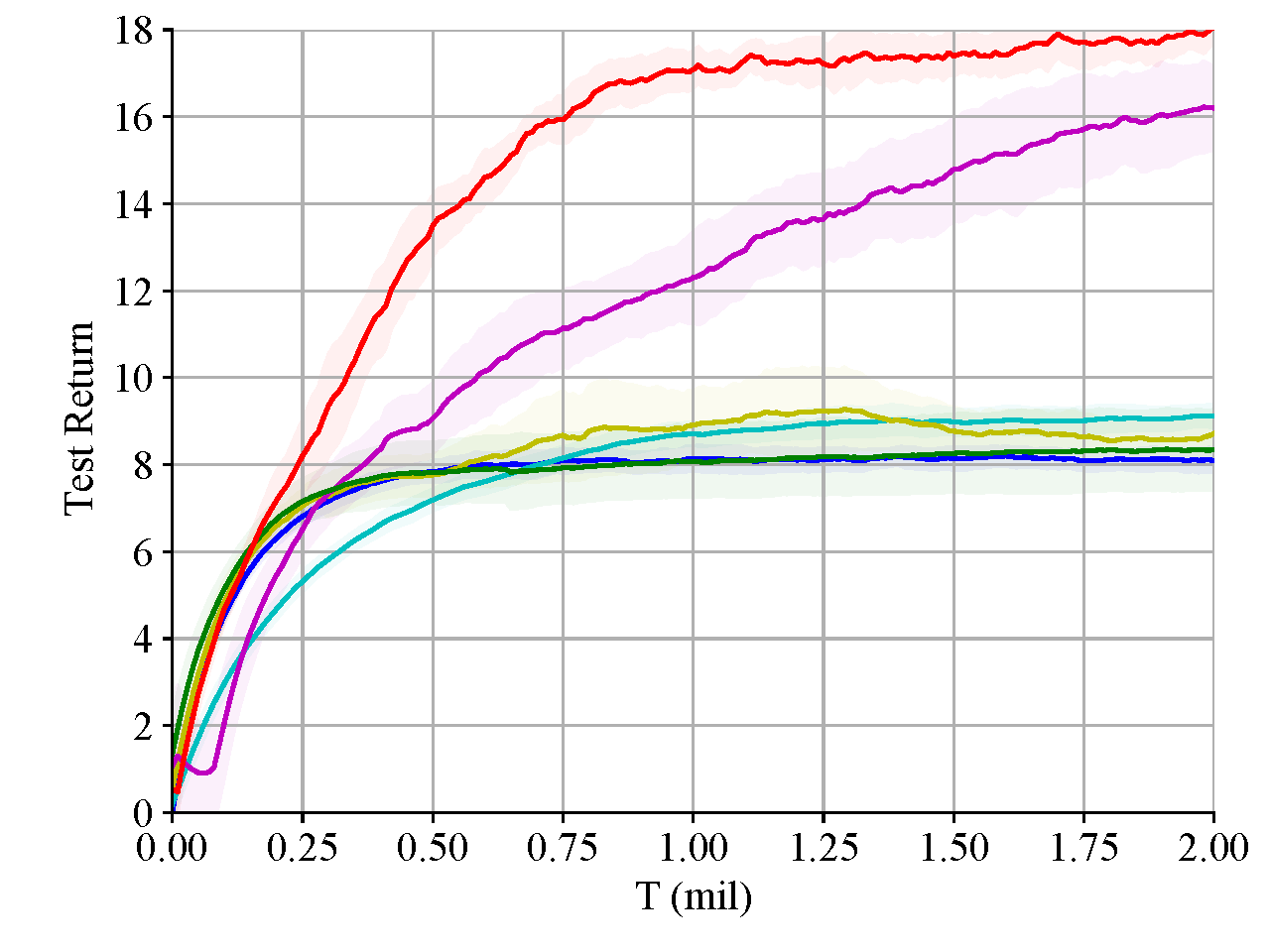}
	\caption{5m\_vs\_6m}
\end{subfigure}
\hfill
\begin{subfigure}[b]{0.245\linewidth}
	\centering
	\includegraphics[width=\linewidth]{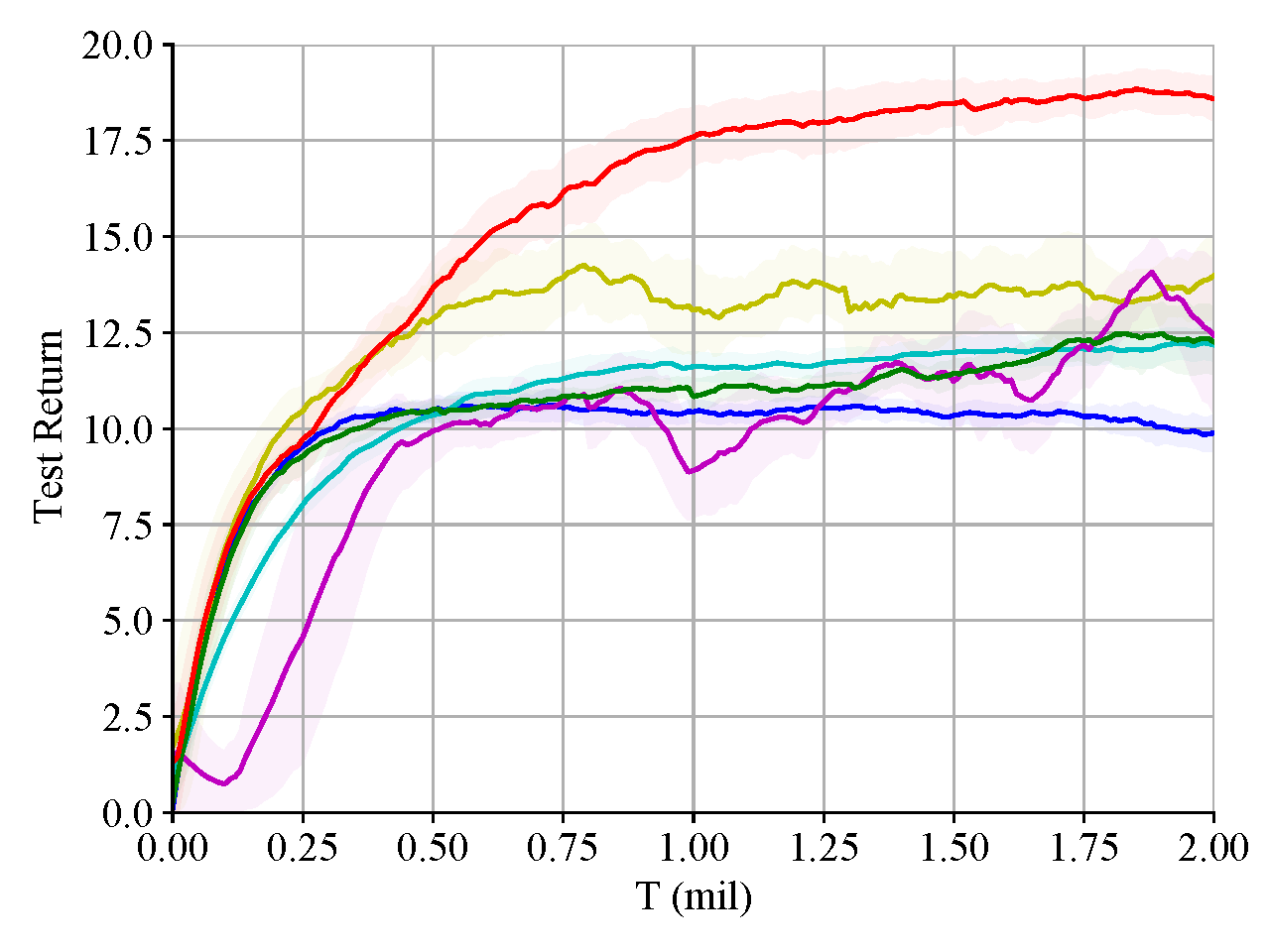}
	\caption{8m\_vs\_9m}
\end{subfigure}
\hfill
\begin{subfigure}[b]{0.245\linewidth}
	\centering
	\includegraphics[width=\linewidth]{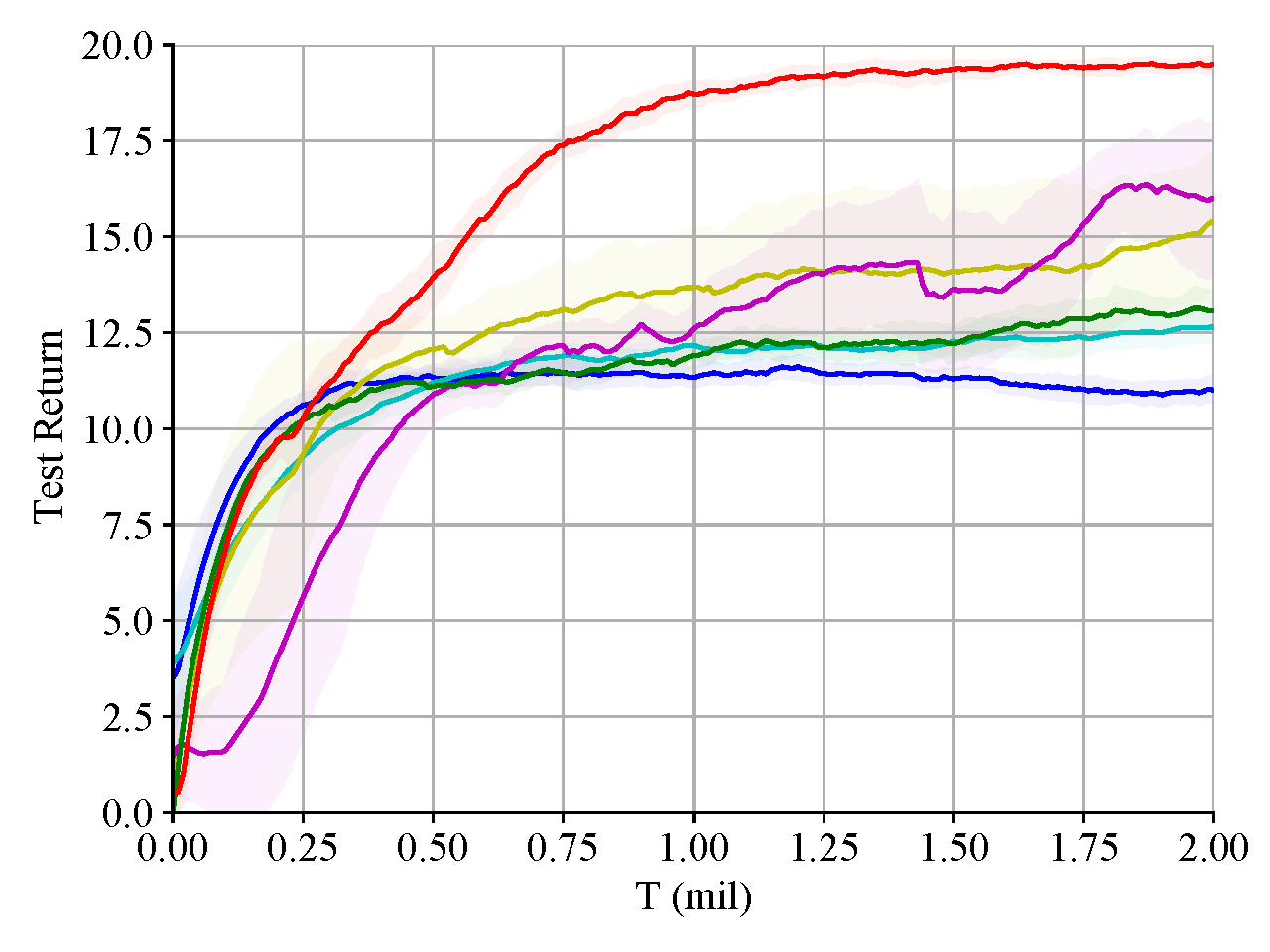}
	\caption{10m\_vs\_11m}
\end{subfigure}
\hfill
\begin{subfigure}[b]{0.245\linewidth}
	\centering
	\includegraphics[width=\linewidth]{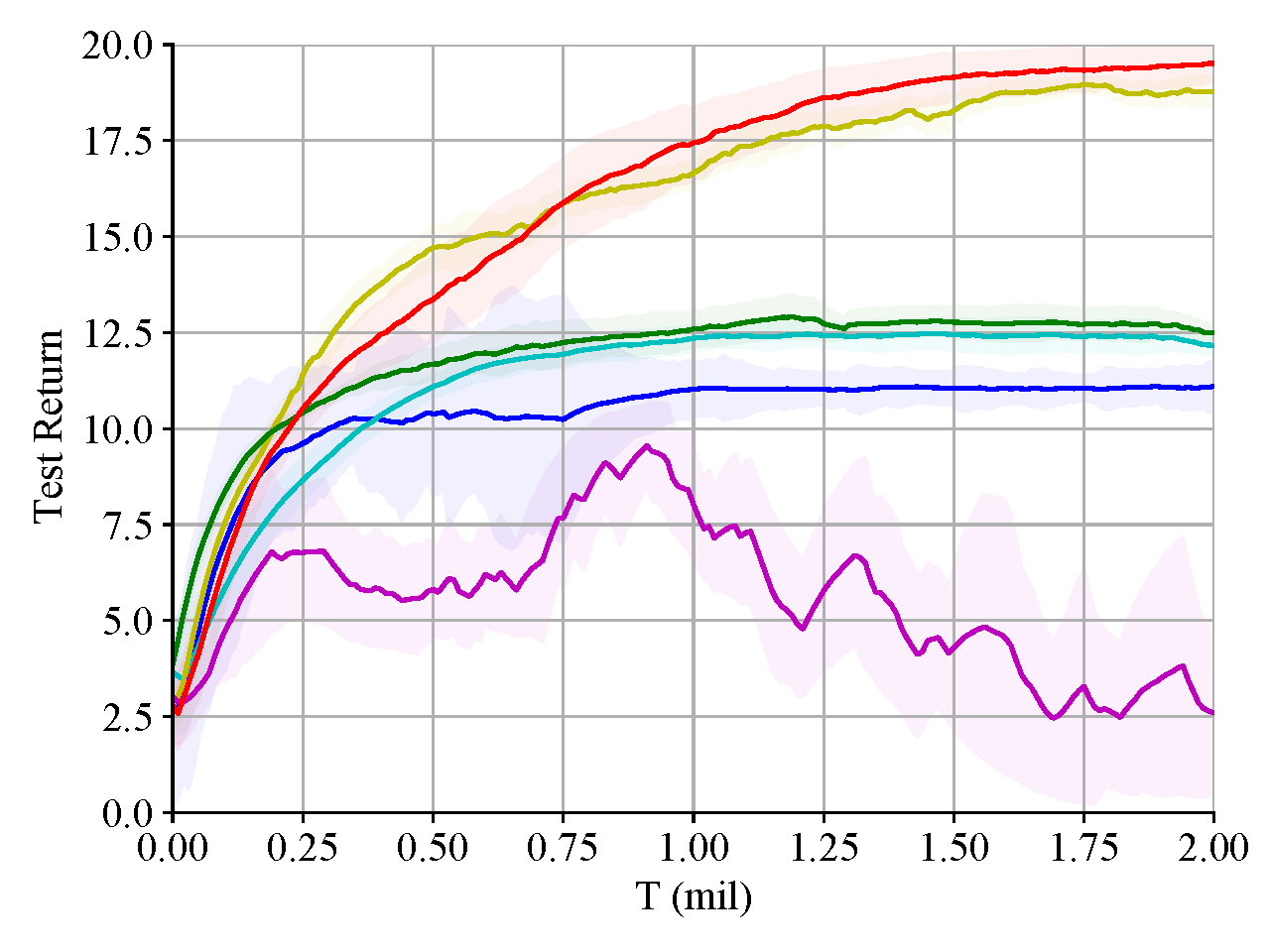}
	\caption{27m\_vs\_30m}
\end{subfigure}
\hfill
\begin{subfigure}[b]{0.245\linewidth}
	\centering
	\includegraphics[width=\linewidth]{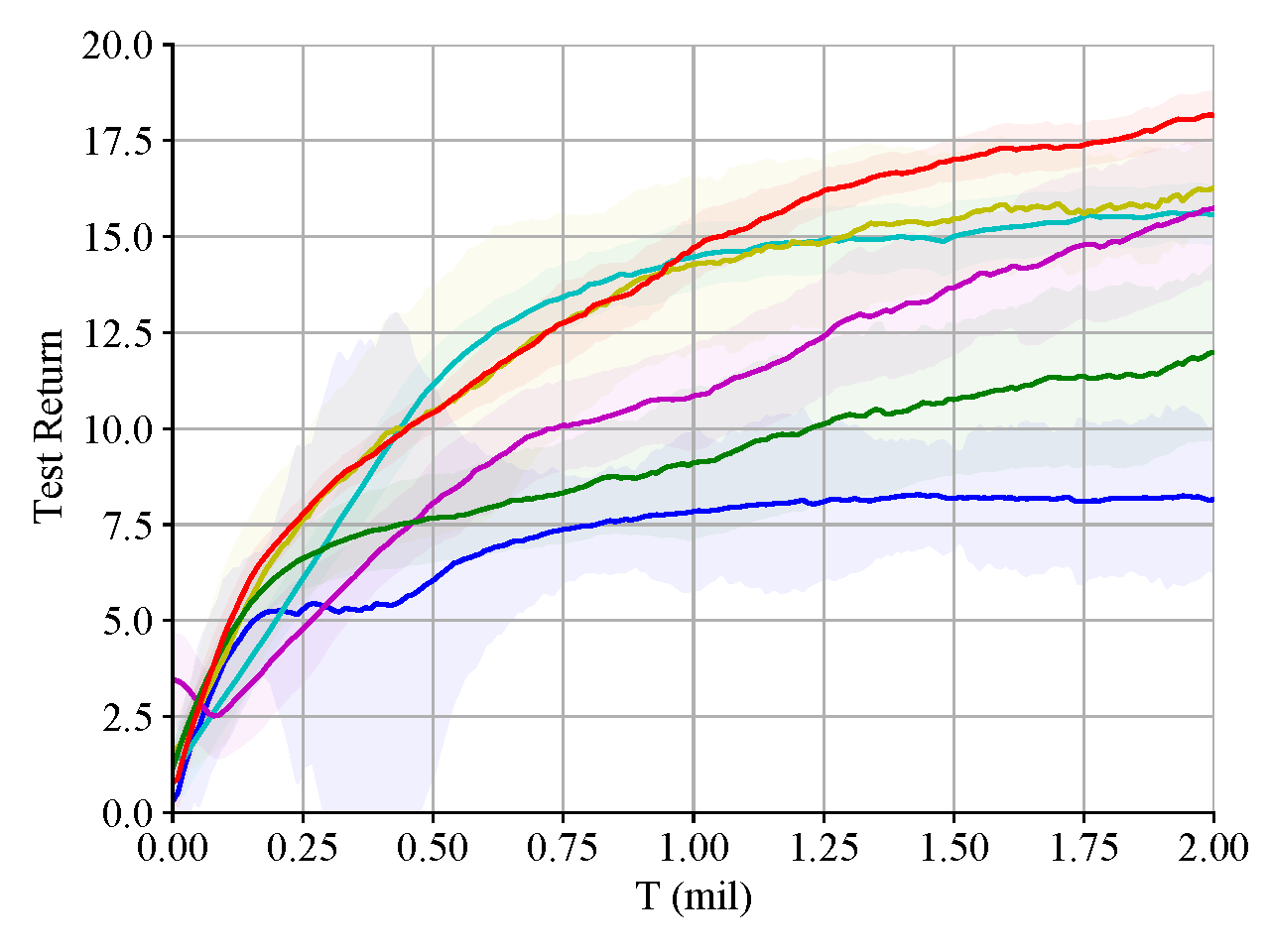}
	\caption{MMM2}
\end{subfigure}
\hfill
\begin{subfigure}[b]{0.245\linewidth}
	\centering
	\includegraphics[width=\linewidth]{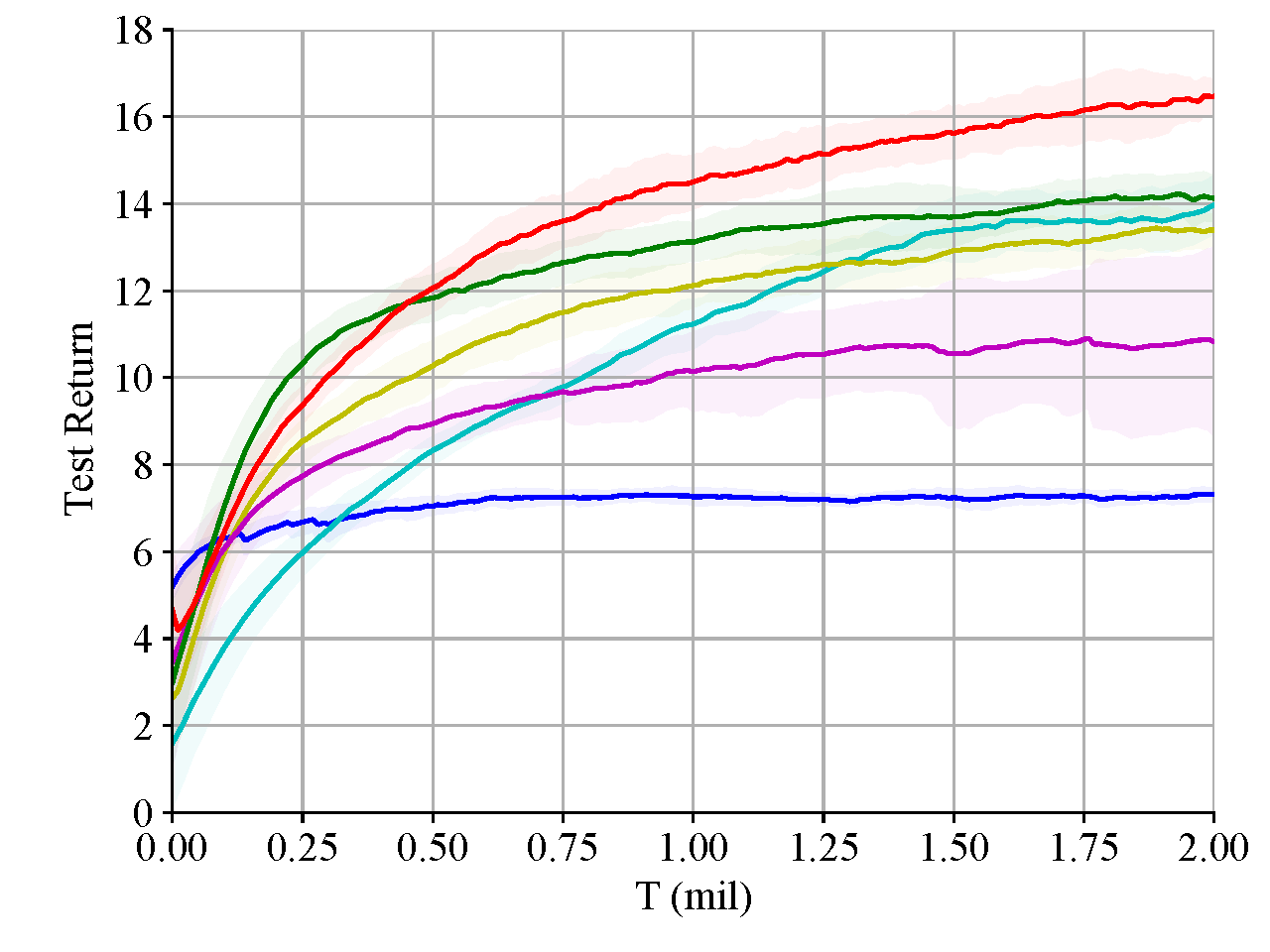}
	\caption{3s5z\_vs\_3s6z}
\end{subfigure}
\caption{ Learning curves of all algorithms in eight maps of StarCraft II.}
\label{fig:smac}
\end{figure*}
In MTQ, we compare MAPPG against existing multi-agent policy gradient algorithms, i.e., MADDPG,  FACMAC, and FOP. Gaussian is used as the action distribution by MAPPG as in SAC \cite{sac}, and the mean of the Gaussian distribution is plotted.  MAPPG uses a similar exploration strategy as in FACMAC, where  actions  are sampled from a uniform random distribution over valid actions up to 10$k$ steps and then from the learned action distribution with Gaussian noise. We use Gaussian noise with mean 0 and standard deviation 1.  The learning paths (20$k$ steps) of all  algorithms are shown as red dots in Figure \ref{fig:mtq} (a) to \ref{fig:mtq} (d).   MAPPG consistently converges to the global optimum while all other baselines fall into the sub-optimum.   MADDPG   can estimate $r(u_1,u_2)$ accurately, but fail to converge to the global optimum.  However, the regular  decomposed actor-critic algorithms (FACMAC and FOP) converge to the sub-optimum and also have limitations to express $r(u_1,u_2)$. The results of these algorithms for the original MTQ game  used by  previous literature \cite{fop}, and  more details are included in Appendix C.
\subsection{StarCraft II}	
We evaluate MAPPG on the challenging StarCraft Multi-Agent Challenge (SMAC) benchmark \cite{smac} in eight maps, including 3s\_vs\_4z, 3s\_vs\_5z, 5m\_vs\_6m, 8m\_vs\_9m, 10m\_vs\_11m, 27m\_vs\_30m, MMM2 and 3s5z\_vs\_3s6z. The baselines include 5 state-of-the-art  MAPG algorithms (COMA, MADDPG, stochastic DOP, FOP, FACMAC). MAPPG uses an $\epsilon$-greedy policy in which $\epsilon$ is annealed from 1 to 0.05 over 50$k$ steps. Results are shown in Figure \ref{fig:smac} and MAPPG outperforms all the baselines in the final performance, which indicates that MAPPG can jump out of sub-optima. More details of the StarCraft II experiments are included in Appendix C.
\subsection{Ablation Studies}
\begin{figure}[htp]	
\centering
\begin{subfigure}[b]{0.495\linewidth}
	\centering
	\includegraphics[width=\linewidth]{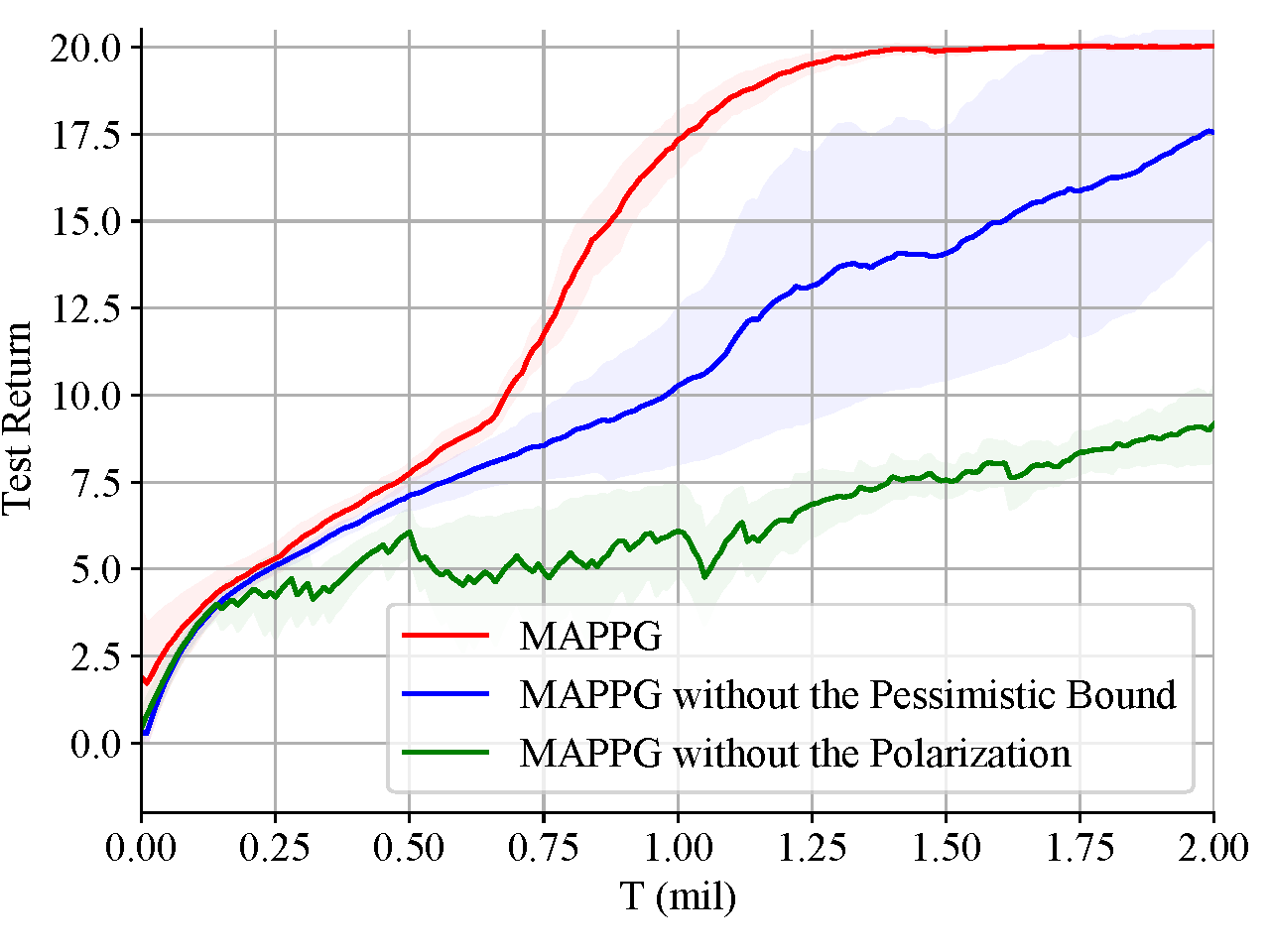}
	\caption{3s\_vs\_4z}
	\label{fig:ablation_smac}
\end{subfigure}
\hfill
\begin{subfigure}[b]{0.495\linewidth}
	\centering
	\includegraphics[width=\linewidth]{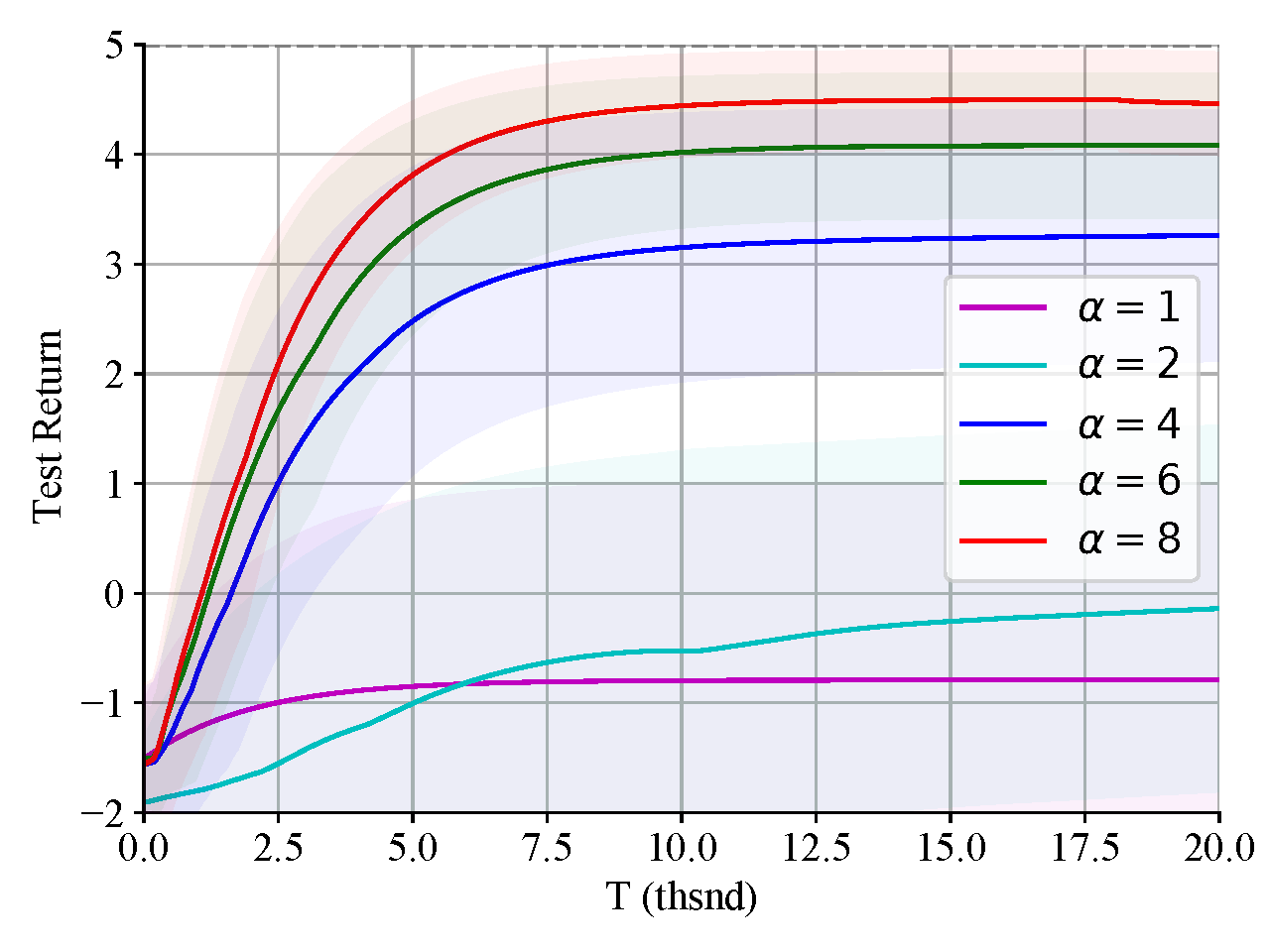}
	\caption{MTQ}
	\label{fig:ablation_mtq}
\end{subfigure}
\caption{ Ablations on the SMAC benchmark and MTQ game.}
\label{fig:ablation}
\end{figure}
In Figure \ref{fig:ablation} (a),  the comparison between the  MAPPG and \textit{MAPPG without the pessimistic bound}  demonstrates the importance of making conservative policy improvements, which can alleviate the problem of excessively large policy updates caused by inaccurate value function estimation. The comparison between the  MAPPG and \textit{MAPPG without the polarization}  demonstrates the contribution of polarization joint  action values to global convergence.  In Figure \ref{fig:ablation} (b),  we  observe that MAPPG can converge to a policy that can obtain a reward closer to the optimal reward as the increase of the  enlargement factor. The discrepancy between the learning curve of $\alpha=\{1,\ 2\}$ and other learning curves indicates  the influence of polarization. The difference in performance of different $\alpha$ is consistent with Theorem \ref{thm:optimality_consistency} and \ref{thm:jpi} even in the continuous action space.
\section{Conclusion}
\label{sec:conclusion}
This paper presents MAPPG, a novel multi-agent actor-critic framework that allows centralized end-to-end training and efficiently learns to do credit assignment properly to enable decentralized execution.  MAPPG takes advantage of the polarization joint action value that efficiently guarantees the  consistency  between  individual optimal actions and the joint optimal  action. Our theoretical analysis shows that MAPPG can converge to optimality. Empirically,  MAPPG  achieves competitive results compared with state-of-the-art MAPG baselines for large-scale complex multi-agent cooperations.

\section*{Acknowledgments}
This work is supported in part by  Science and Technology Innovation 2030 – “New Generation Artificial Intelligence” Major Project (2018AAA0100905), National Natural Science Foundation of China (62192783, 62106100, 62206133), Primary Research \& Developement Plan of Jiangsu Province (BE2021028), Jiangsu Natural Science Foundation (BK20221441), Shenzhen Fundamental Research Program (2021Szvup056), CAAI-Huawei MindSpore Open Fund,  State Key Laboratory of Novel Software Technology Project (KFKT2022B12), Jiangsu Provincial Double-Innovation Doctor Program (JSSCBS20210021, JSSCBS20210539), and in part by the Collaborative Innovation Center of Novel Software Technology and Industrialization. The authors would like to thank the anonymous reviewers for their helpful advice and support.

\bibliography{aaai23}

\begin{thebibliography}{30}
\providecommand{\natexlab}[1]{#1}

\bibitem[{Agarwal et~al.(2021)Agarwal, Kakade, Lee, and
  Mahajan}]{DBLP:journals/jmlr/AgarwalKLM21}
Agarwal, A.; Kakade, S.~M.; Lee, J.~D.; and Mahajan, G. 2021.
\newblock On the Theory of Policy Gradient Methods: Optimality, Approximation,
  and Distribution Shift.
\newblock \emph{Journal of Machine Learning Research}, 22(98): 1--76.

\bibitem[{Balasubramanian et~al.(2021)Balasubramanian, Kobyzev, Bahuleyan,
  Shapiro, and Vechtomova}]{polar-vae}
Balasubramanian, V.; Kobyzev, I.; Bahuleyan, H.; Shapiro, I.; and Vechtomova,
  O. 2021.
\newblock Polarized-VAE: Proximity Based Disentangled Representation Learning
  for Text Generation.
\newblock In \emph{Conference of the European Chapter of the Association for
  Computational Linguistics (EACL)}, 416--423.

\bibitem[{Cao et~al.(2021)Cao, Wang, Darrell, and Yu}]{DBLP:conf/icra/CaoWDY21}
Cao, J.; Wang, X.; Darrell, T.; and Yu, F. 2021.
\newblock Instance-Aware Predictive Navigation in Multi-Agent Environments.
\newblock In \emph{{IEEE} International Conference on Robotics and Automation
  (ICRA)}, 5096--5102.

\bibitem[{Foerster et~al.(2018)Foerster, Farquhar, Afouras, Nardelli, and
  Whiteson}]{coma}
Foerster, J.~N.; Farquhar, G.; Afouras, T.; Nardelli, N.; and Whiteson, S.
  2018.
\newblock Counterfactual Multi-Agent Policy Gradients.
\newblock In \emph{AAAI Conference on Artifcial Intelligence (AAAI)},
  2974--2982.

\bibitem[{Goodfellow, Bengio, and Courville(2016)}]{DBLP:books/daglib/0040158}
Goodfellow, I.; Bengio, Y.; and Courville, A. 2016.
\newblock \emph{Deep Learning}.
\newblock MIT Press.
\newblock \url{http://www.deeplearningbook.org}.

\bibitem[{Haarnoja et~al.(2018)Haarnoja, Zhou, Abbeel, and Levine}]{sac}
Haarnoja, T.; Zhou, A.; Abbeel, P.; and Levine, S. 2018.
\newblock Soft Actor-Critic: Off-Policy Maximum Entropy Deep Reinforcement
  Learning with a Stochastic Actor.
\newblock In \emph{International Conference on Machine Learning (ICML)},
  1856--1865.

\bibitem[{Hernandez{-}Leal, Kartal, and
  Taylor(2019)}]{DBLP:journals/aamas/Hernandez-LealK19}
Hernandez{-}Leal, P.; Kartal, B.; and Taylor, M.~E. 2019.
\newblock A survey and critique of multiagent deep reinforcement learning.
\newblock \emph{Autonomous Agents and Multi-Agent Systems}, 33(6): 750--797.

\bibitem[{Konda and Tsitsiklis(1999)}]{ac}
Konda, V.~R.; and Tsitsiklis, J.~N. 1999.
\newblock Actor-Critic Algorithms.
\newblock In \emph{Advances in Neural Information Processing Systems
  (NeurIPS)}, 1008--1014.

\bibitem[{Kraemer and Banerjee(2016)}]{DBLP:journals/ijon/KraemerB16}
Kraemer, L.; and Banerjee, B. 2016.
\newblock Multi-agent reinforcement learning as a rehearsal for decentralized
  planning.
\newblock \emph{Neurocomputing}, 190: 82--94.

\bibitem[{Lowe et~al.(2017)Lowe, Wu, Tamar, Harb, Abbeel, and
  Mordatch}]{maddpg}
Lowe, R.; Wu, Y.; Tamar, A.; Harb, J.; Abbeel, P.; and Mordatch, I. 2017.
\newblock Multi-Agent Actor-Critic for Mixed Cooperative-Competitive
  Environments.
\newblock In \emph{Advances in Neural Information Processing Systems
  (NeurIPS)}, 6379--6390.

\bibitem[{Oliehoek, Spaan, and Vlassis(2008)}]{DBLP:journals/jair/OliehoekSV08}
Oliehoek, F.~A.; Spaan, M. T.~J.; and Vlassis, N. 2008.
\newblock Optimal and Approximate Q-value Functions for Decentralized POMDPs.
\newblock \emph{Journal of Artificial Intelligence Research}, 32: 289--353.

\bibitem[{Omidshafiei et~al.(2017)Omidshafiei, Pazis, Amato, How, and
  Vian}]{DBLP:conf/icml/OmidshafieiPAHV17}
Omidshafiei, S.; Pazis, J.; Amato, C.; How, J.~P.; and Vian, J. 2017.
\newblock Deep Decentralized Multi-task Multi-Agent Reinforcement Learning
  under Partial Observability.
\newblock In \emph{International Conference on Machine Learning (ICML)},
  2681--2690.

\bibitem[{Peng et~al.(2021)Peng, Rashid, de~Witt, Kamienny, Torr, Boehmer, and
  Whiteson}]{facmac}
Peng, B.; Rashid, T.; de~Witt, C.~S.; Kamienny, P.; Torr, P. H.~S.; Boehmer,
  W.; and Whiteson, S. 2021.
\newblock {FACMAC:} Factored Multi-Agent Centralised Policy Gradients.
\newblock In \emph{Advances in Neural Information Processing Systems
  (NeurIPS)}, 12208--12221.

\bibitem[{Proper and Tumer(2012)}]{DBLP:conf/aamas/ProperT12}
Proper, S.; and Tumer, K. 2012.
\newblock Modeling difference rewards for multiagent learning.
\newblock In \emph{International Conference on Autonomous Agents and Multiagent
  Systems (AAMAS)}, 1397--1398.

\bibitem[{Rashid et~al.(2020)Rashid, Farquhar, Peng, and
  Whiteson}]{weighted-qmix}
Rashid, T.; Farquhar, G.; Peng, B.; and Whiteson, S. 2020.
\newblock Weighted {QMIX:} Expanding Monotonic Value Function Factorisation for
  Deep Multi-Agent Reinforcement Learning.
\newblock In \emph{Advances in Neural Information Processing Systems
  (NeurIPS)}, 10199--10210.

\bibitem[{Rashid et~al.(2018)Rashid, Samvelyan, Schroeder, Farquhar, Foerster,
  and Whiteson}]{qmix}
Rashid, T.; Samvelyan, M.; Schroeder, C.; Farquhar, G.; Foerster, J.; and
  Whiteson, S. 2018.
\newblock {QMIX}: Monotonic Value Function Factorisation for Deep Multi-Agent
  Reinforcement Learning.
\newblock In \emph{International Conference on Machine Learning (ICML)},
  4295--4304.

\bibitem[{Samvelyan et~al.(2019)Samvelyan, Rashid, de~Witt, Farquhar, Nardelli,
  Rudner, Hung, Torr, Foerster, and Whiteson}]{smac}
Samvelyan, M.; Rashid, T.; de~Witt, C.~S.; Farquhar, G.; Nardelli, N.; Rudner,
  T. G.~J.; Hung, C.; Torr, P. H.~S.; Foerster, J.~N.; and Whiteson, S. 2019.
\newblock The StarCraft Multi-Agent Challenge.
\newblock In \emph{International Conference on Autonomous Agents and Multiagent
  Systems (AAMAS)}, 2186--2188.

\bibitem[{Schulman et~al.(2016)Schulman, Moritz, Levine, Jordan, and
  Abbeel}]{DBLP:journals/corr/SchulmanMLJA15}
Schulman, J.; Moritz, P.; Levine, S.; Jordan, M.~I.; and Abbeel, P. 2016.
\newblock High-Dimensional Continuous Control Using Generalized Advantage
  Estimation.
\newblock In \emph{International Conference on Learning Representations
  (ICLR)}.

\bibitem[{Shibata, Jimbo, and Matsubara(2021)}]{DBLP:conf/icra/ShibataJM21}
Shibata, K.; Jimbo, T.; and Matsubara, T. 2021.
\newblock Deep reinforcement learning of event-triggered communication and
  control for multi-agent cooperative transport.
\newblock In \emph{{IEEE} International Conference on Robotics and Automation
  (ICRA)}, 8671--8677.

\bibitem[{Son et~al.(2019)Son, Kim, Kang, Hostallero, and Yi}]{qtran}
Son, K.; Kim, D.; Kang, W.~J.; Hostallero, D.; and Yi, Y. 2019.
\newblock {QTRAN:} Learning to Factorize with Transformation for Cooperative
  Multi-Agent Reinforcement Learning.
\newblock In \emph{International Conference on Machine Learning (ICML)},
  5887--5896.

\bibitem[{Su, Adams, and Beling(2021)}]{vdac}
Su, J.; Adams, S.~C.; and Beling, P.~A. 2021.
\newblock Value-Decomposition Multi-Agent Actor-Critics.
\newblock In \emph{AAAI Conference on Artifcial Intelligence (AAAI)},
  11352--11360.

\bibitem[{Sunehag et~al.(2018)Sunehag, Lever, Gruslys, Czarnecki, Zambaldi,
  Jaderberg, Lanctot, Sonnerat, Leibo, Tuyls, and Graepel}]{vdn}
Sunehag, P.; Lever, G.; Gruslys, A.; Czarnecki, W.~M.; Zambaldi, V.~F.;
  Jaderberg, M.; Lanctot, M.; Sonnerat, N.; Leibo, J.~Z.; Tuyls, K.; and
  Graepel, T. 2018.
\newblock Value-Decomposition Networks For Cooperative Multi-Agent Learning
  Based On Team Reward.
\newblock In \emph{International Conference on Autonomous Agents and Multiagent
  Systems (AAMAS)}, 2085--2087.

\bibitem[{Sutton et~al.(1999)Sutton, McAllester, Singh, and Mansour}]{pg}
Sutton, R.~S.; McAllester, D.~A.; Singh, S.; and Mansour, Y. 1999.
\newblock Policy Gradient Methods for Reinforcement Learning with Function
  Approximation.
\newblock In \emph{Advances in Neural Information Processing Systems
  (NeurIPS)}, 1057--1063.

\bibitem[{Wang et~al.(2021{\natexlab{a}})Wang, Ren, Liu, Yu, and Zhang}]{qplex}
Wang, J.; Ren, Z.; Liu, T.; Yu, Y.; and Zhang, C. 2021{\natexlab{a}}.
\newblock {QPLEX:} Duplex Dueling Multi-Agent Q-Learning.
\newblock In \emph{International Conference on Learning Representations
  (ICLR)}.

\bibitem[{Wang et~al.(2020)Wang, Zhang, Kim, and Gu}]{sqddpg}
Wang, J.; Zhang, Y.; Kim, T.; and Gu, Y. 2020.
\newblock Shapley Q-Value: {A} Local Reward Approach to Solve Global Reward
  Games.
\newblock In \emph{AAAI Conference on Artifcial Intelligence (AAAI)},
  7285--7292.

\bibitem[{Wang et~al.(2021{\natexlab{b}})Wang, Han, Wang, Dong, and
  Zhang}]{dop}
Wang, Y.; Han, B.; Wang, T.; Dong, H.; and Zhang, C. 2021{\natexlab{b}}.
\newblock {DOP:} Off-Policy Multi-Agent Decomposed Policy Gradients.
\newblock In \emph{International Conference on Learning Representations
  (ICLR)}.

\bibitem[{Wei and Luke(2016)}]{DBLP:journals/jmlr/WeiL16}
Wei, E.; and Luke, S. 2016.
\newblock Lenient Learning in Independent-Learner Stochastic Cooperative Games.
\newblock \emph{Journal of Machine Learning Research}, 17(84): 1--42.

\bibitem[{Wei et~al.(2018)Wei, Wicke, Freelan, and
  Luke}]{DBLP:conf/aaaiss/WeiWFL18}
Wei, E.; Wicke, D.; Freelan, D.; and Luke, S. 2018.
\newblock Multiagent Soft Q-Learning.
\newblock In \emph{AAAI Spring Symposium Series}.

\bibitem[{Zhang et~al.(2021)Zhang, Li, Wang, Xie, and Lu}]{fop}
Zhang, T.; Li, Y.; Wang, C.; Xie, G.; and Lu, Z. 2021.
\newblock {FOP:} Factorizing Optimal Joint Policy of Maximum-Entropy
  Multi-Agent Reinforcement Learning.
\newblock In \emph{International Conference on Machine Learning (ICML)},
  12491--12500.

\bibitem[{Zhou et~al.(2020)Zhou, Liu, Sui, Li, and Chung}]{lica}
Zhou, M.; Liu, Z.; Sui, P.; Li, Y.; and Chung, Y.~Y. 2020.
\newblock Learning Implicit Credit Assignment for Cooperative Multi-Agent
  Reinforcement Learning.
\newblock In \emph{Advances in Neural Information Processing Systems
  (NeurIPS)}, 11853--11864.

\end{thebibliography}

\end{document}